\newtheorem{theorem}{Theorem}
\newtheorem{proposition}[theorem]{Proposition}
\DeclareMathAlphabet{\mathsfit}{\encodingdefault}{\sfdefault}{m}{sl}
\SetMathAlphabet{\mathsfit}{bold}{\encodingdefault}{\sfdefault}{bx}{n}
\def\gI{{\mathcal{I}}}
\newcommand{\E}{\mathbb{E}}
\newcommand{\R}{\mathbb{R}}
\DeclareMathOperator*{\argmax}{arg\,max}
\DeclarePairedDelimiterX{\infdivx}[2]{(}{)}{%
  #1\;\delimsize|\delimsize|\;#2%
}
\icmltitlerunning{Policy Information Capacity}
\begin{document}

\twocolumn[
\icmltitle{Policy Information Capacity: \\ Information-Theoretic Measure for Task Complexity \\ in Deep Reinforcement Learning}



\icmlsetsymbol{equal}{*}

\begin{icmlauthorlist}
\icmlauthor{Hiroki Furuta}{ut}
\icmlauthor{Tatsuya Matsushima}{ut}
\icmlauthor{Tadashi Kozuno}{ua}
\icmlauthor{Yutaka Matsuo}{ut}\\
\icmlauthor{Sergey Levine}{goo}
\icmlauthor{Ofir Nachum}{goo}
\icmlauthor{Shixiang Shane Gu}{goo}
\end{icmlauthorlist}

\icmlaffiliation{ut}{The University of Tokyo, Tokyo, Japan}
\icmlaffiliation{ua}{University of Alberta, Edmonton, Canada}  
\icmlaffiliation{goo}{Google Research, Mountain View, USA}

\icmlcorrespondingauthor{Hiroki Furuta}{furuta@weblab.t.u-tokyo.ac.jp}

\icmlkeywords{Policy Information Capacity, Policy-Optimal Information Capacity, Task Complexity, Empowerment, Mutual Information, Reinforcement Learning}

\vskip 0.3in
]



\printAffiliationsAndNotice{}  

\begin{abstract}
Progress in deep reinforcement learning (RL) research is largely enabled by benchmark task environments.
However, analyzing the nature of those environments is often overlooked. In particular, we still do not have agreeable ways to measure the difficulty or solvability of a task, given that each has fundamentally different actions, observations, dynamics, rewards, and can be tackled with diverse RL algorithms.
In this work, we propose \textit{policy information capacity (PIC)} -- the mutual information between policy parameters and episodic return -- and \textit{policy-optimal information capacity (POIC)} -- between policy parameters and episodic optimality -- as two environment-agnostic, algorithm-agnostic quantitative metrics for task difficulty.
Evaluating our metrics across toy environments as well as continuous control benchmark tasks from OpenAI Gym and DeepMind Control Suite, we empirically demonstrate that these information-theoretic metrics have higher correlations with normalized task solvability scores than a variety of alternatives.
Lastly, we show that these metrics can also be used for fast and compute-efficient optimizations of key design parameters such as reward shaping, policy architectures, and MDP properties for better solvability by RL algorithms without ever running full RL experiments\footnote{The code is available at~\url{https://github.com/frt03/pic}.}.
\end{abstract}

\section{Introduction}
\label{sec:introduction}
The myriad recent successes of reinforcement learning~(RL) have arguably been enabled by the proliferation of deep neural network function approximators applied to rich observational inputs~\cite{mnih2015human, hessel2018rainbow, kalashnikov2018qtopt}, enabling an agent to adeptly manage long-term sequential decision making in complex environments. 
While in the past much of the empirical RL research has focused on tabular or linear function approximation case~\cite{Dietterich1998taxi, McGovernB2001bottleneck, Konidaris2009SkillChaining}, the impressive successes of recent years (and anticipation of domains ripe for subsequent successes) has spurred the creation of \emph{non-tabular} benchmarks -- i.e., continuous control and/or continuous observation -- in which neural network function approximators are effectively a prerequisite~\cite{Bellemare2013ALE, gym2016openai, tassa2018deepmind}. 
Accordingly, empirical RL research is presently heavily focused on the use of neural network function approximators, spurring new algorithmic developments in both model-free~\cite{mnih2015human, Schulman:2015uk, Lillicrap2016, gu2016continuous,gu2017interpolated,haarnoja2018sacapps} and model-based~\cite{chua2018deep, janner2019trust, Hafner2020Dream} RL.

Despite the impressive progress of RL algorithms, the analysis of the RL \textit{environments} has been difficult and stagnant, precisely due to the complexity of modern benchmarks and neural network architectures required to solve them.
Most prior works analyzing sample complexity (a common measure of task complexity) focus on tabular MDPs with finite state and action dimensionalities~\citep{strehl2006pac,Strehl2009pac} or MDPs with simple dynamics~\citep{recht2018tour}, and are not applicable or measurable for typical deep RL benchmark tasks.
Besides the fact that the components of the environments -- observations, actions, dynamics, and rewards -- are drastically different in typical benchmarks, the investigation into task solvability is further complicated by the diversity of deep RL algorithms used in practice~\cite{Schulman:2015uk, Lillicrap2016, gu2016continuous, gu2016q, gu2017interpolated, haarnoja2018sacapps,chua2018deep, salimans2017evolution}, where each algorithm has distinct convergence behaviors.
Coming up with a universal, scalable, and measurable definition for task complexity of an RL environment appears an impossible task. 

In this work, we propose \textit{policy information capacity (PIC)} and \textit{policy-optimal information capacity (POIC)} as practical metrics for task or environment complexity, taking inspiration from information-theoretic RL research -- particularly mutual information maximization or \textit{empowerment}~\citep{Klyubin2005empower,tishby2011information,Tobias2012empower,Mohamed2015VariationalIM,eysenbach2019diyan,warde-farley2019discern,sharma2020dynamics,hafner2020action}\footnote{``Empowerment'' classically measures MI between actions and states, but since rewards can be thought as an additional state dimension, we might regard PIC and POIC as a type of empowerment.}. 
Policy information capacity measures mutual information between policy parameters and cumulative episodic rewards. As with standard decomposition in mutual information, maximizing policy information capacity corresponds to maximizing reward marginal entropy through policies (diversity maximization) while minimizing reward conditional entropy conditioned on any given policy parameter (predictability maximization), and effectively corresponds to maximizing reward \textit{controllability}. 
Intuitively, if an agent can easily control rewards or relevant states that affect the cumulative rewards, then most RL algorithms can generally\footnote{See Section~\ref{sec:controllability_vs_maximizability} for a thorough discussion.} maximize the rewards easily and the environment should be classified as ``easy''. Alternatively, policy-optimal information capacity (POIC), a variant of PIC drawing on the control as inference literature~\citep{todorov2006linearly, toussaint2006probabilistic, rawlik2012psi, fox2015taming, jaques2017sequence, haarnoja2017reinforcement, levine2018reinforcement}, measures mutual information between policy parameters and whether an episode is optimal or not, and even more closely relates to the optimizability of an RL environment. 

We compute policy and policy-optimal information capacities across a range of benchmark environments~\cite{gym2016openai,tassa2018deepmind}, and show that, in practice, especially POIC has a higher correlation with normalized task scores (computed as a brute-force proxy for task complexity by executing many runs of RL algorithms) than other sensible alternatives. Considering the simplicity of our metrics and the drastically varied nature of the benchmarks, our result shows that PIC and POIC can serve as practical and measurable metrics for task complexity. 

In summary, our work makes the following contributions:
\begin{itemize}[leftmargin=0.4cm,topsep=0pt,itemsep=0pt]
    \item We define and propose \textit{policy information capacity (PIC)} and its variant, \textit{policy-optimal information capacity (POIC)} as algorithm-agnostic quantitative metrics for measuring task complexity, and show that, \textit{POIC} in particular corresponds well with empirical task solvability computed across diverse benchmark environments~\citep{gym2016openai,tassa2018deepmind}. 
    \item We set up the first quantitative experimental protocol to evaluate the correctness of a task complexity metric. 
    \item We show that both PIC and POIC can be used as fast proxies for tuning experimental parameters to improve learning progress, such as reward shaping, or policy architectures and initialization parameters, without running any full RL experiment.
\end{itemize}

\section{Related Work}
\label{sec:related_work}
We provide a brief overview of related works, first of previously proposed proxy metrics for assessing the properties of RL algorithms or environments, and then of instances of mutual information (MI) in the context of RL.

\paragraph{Analysis of RL Environments}
A large body of prior work has sought to theoretically analyze RL algorithms, as opposed to RL environments. For example, \citet{kearns2002near,Strehl2009pac,christoph2015ucfh} derive sample complexity bounds under a finite MDP setting, while \citet{Jaksch2010regret,azar2017ucbvi,jin2018qlearning} and \citet{jin2020linear} prove regret bounds under finite MDP and linear function approximation settings respectively.
Some recent works extend these previous results to non-linear function approximation \citep{du2019gradient,wang2020eluder,yang2020neuralrl}, but they require strong assumptions on function approximators, such as a low Eluder dimension or infinitely-wide neural networks.
All these works, however, are algorithm-specific and do not study the properties of RL environments or MDPs.

Asides from algorithms, there are theoretical works that directly study the properties of MDPs. \citet{Jaksch2010regret} consider the diameter of an MDP, which is the maximum over distinct state pairs $(s, s')$ of expected steps to reach $s'$ from $s$.
\citet{jiang2017rank} propose \textit{Bellman rank} and show that an MDP with a low Bellman rank can be provably-efficiently solved.
\citet{Maillard2014distnorm} propose the environmental norm, the one-step variance of an optimal state-value function. 
However, those metrics are often intractable to estimate in practical problems, where state or action dimensions are high dimensional~\citep{Jaksch2010regret,pong2018temporal}, neural network function approximations are used~\citep{jiang2017rank,dann2018oraclepac}, or oracle Q-functions are not computable~\citep{Jaksch2010regret,jiang2017rank, Maillard2014distnorm}.
Orthogonally to all these works, we propose tractable metrics that can be approximated numerically for complex RL environments with high-dimensional states and actions and, crucially, complex function approximators such as neural networks.

The recent work of \citet{oller2020analyzing} is the closest to ours, where they qualitatively visualize marginal reward distributions and show how their variances are intuitively related to environment-specific task difficulty scores estimated from a random search algorithm.
While they present very promising early results for tackling this ambitious problem, ours has a few critical differences from their work, which we detail in Section~\ref{sec:pre_rwg}.

\paragraph{Mutual Information}
Mutual information has been widely used in RL algorithm development, as a mechanism to encourage emergence of diverse behaviors, sometimes known as \textit{empowerment}~\cite{Klyubin2005empower,tishby2011information,Tobias2012empower, Mohamed2015VariationalIM}.
\citet{gregor2016variational} employ such diverse behaviors as intrinsic-motivation-based exploration methods, which intend to reach diverse states per option, maximizing the lower bound of mutual information between option and trajectory.
Related to exploration~\citep{Leibfried2019rme,pong2020skewfit}, recently MI-based skill discovery has become a popular topic~\citep{florensa2017stochastic,eysenbach2019diyan,warde-farley2019discern,nachum2019near-optimal, sharma2020dynamics,sharma2020emergent,Campos2020ExploreDA,Hansen2020Fast}, and these previous works are sources of inspiration for our own metrics, PIC and POIC.
For instance, \citet{eysenbach2019diyan} and \citet{warde-farley2019discern} learn diverse behaviors through maximizing a lower bound on mutual information between skills and future states, which encourages the agent to learn many distinct skills. In other words, the agents learn how to control the environments (future states) via maximization of mutual information.
This intuition -- that mutual information is related to the controllability of the environments -- motivates our own MI-based task solvability metrics, where our metrics PIC and POIC can be seen as \textit{reward} and \textit{optimality empowerments} respectively.

\section{Preliminaries}
\label{sec:preliminaries}
We consider standard RL settings with a Markov Decision Process~(MDP) $\mathcal{M}$ defined by state space $\mathcal{S}$, action space $\mathcal{A}$, transition probability $p(s_{t+1}|s_t, a_t)$, initial state distribution $p(s_1)$, and reward function $r(s_t, a_t)$.
A policy\footnote{While we denote Markovian policies in our derivations, our metrics are also valid for non-Markovian policies.} $\pi(a|s) \in \Pi$ maps from states to probability distributions over actions. With function approximation, this policy is parameterized by $\theta \in \mathbb{R}^d$, initialized by sampling from a prior distribution of the parameter $p(\theta)$\footnote{For the familiarity of notations, we introduce $\theta$ as parameters of a parametric function. However, in general, $\theta$ can represent the function itself. Since our methods do not require estimations of  $\mathcal{H}(\Theta|\cdot)$, any distribution over functions is applicable, e.g. $p(\theta)$ can represent a distribution over different network architectures.}. 
We use the upper case $\Theta$ to represent this random variable. 
We also denote a trajectory as $\tau := (s_1, a_1, s_2, a_2, ..., s_T)$, and a cumulative reward as $r(\tau) := \sum_{(s, a) \in \tau} r(s,a)$; when clear from the context, we slightly abuse notation and simply use $r$ for $r(\tau)$. 
We use the upper case $R$ to represent the random variable taking on value $r(\tau)$. 
Since we focus on evaluation of the environments, we omit a discount factor $\gamma \in [0, 1)$.

The distributions $p(r)$ and $p(r|\theta)$ may be factored as,
\begin{equation}
p(r) = \E_{p(\tau|\theta)p(\theta)}\left[ p(r|\tau) \right],~p(r|\theta) = \E_{p(\tau|\theta)}\left[ p(r|\tau) \right], \nonumber
\end{equation}
where $p(r|\tau)$ is the reward distribution over trajectory, which, for simplicity, we assume is a deterministic delta distribution, and the marginal distribution of the trajectory conditioned on $\theta$ is $p(\tau|\theta) = \textstyle p(s_1) \prod_{t=1}^{T} p(s_{t+1}|s_t, a_t) \pi(a_t|s_t, \theta)$\,.

\subsection{Optimality Variable}

RL concerns with not only characterization of reward distribution, but also its maximization. Information-theoretic perspective on RL, or control as inference~\citep{todorov2006linearly,toussaint2006probabilistic,fox2015taming,jaques2017sequence,levine2018reinforcement}, connects such maximization with probabilistic inference through the notion of ``optimality'' variable, a binary random variable $\mathcal{O}_t \in \{0, 1\}$ in MDP, where $\mathcal{O}_t=1$ means the agent behaves ``optimally'' at time-step $t$, and $\mathcal{O}_t=0$ means not optimal.
For simplicity, we denote $\mathcal{O}_{1:T}$ as $\mathcal{O}$, which is also a binary random variable, representing whether the agent behaves optimally during the entire episode.
We define the distribution of this variable as: $p(\mathcal{O}=1 | \tau) = \exp\left( (r-r_{\max})/\eta \right)$\,,
where $\eta>0$ is a temperature and $r_{\max}$ is the maximum return on the MDP. Note that we subtract $r_{\max}$ to ensure $p(\mathcal{O}|\tau)$ is an appropriate probability distribution.

\subsection{Random Weight Guessing}
\label{sec:pre_rwg}
\citet{oller2020analyzing} recently proposed a qualitative analysis protocol of environment complexity with function approximation via random weight guessing.
It obtains $N$ particles of $\theta$ from prior $p(\theta)$ and runs the deterministic policy $\pi(a_t|s_t, \theta)$ with $M$ episodes per parameter \textit{without any training}. They qualitatively observe that the mean, percentiles, and variance of episodic returns have certain relations with an approximate difficulty of finding a good policy through random search.

However, our work has a number of key differences from their work: (1) we propose a detailed \textit{quantitative} evaluation protocol for verifying task difficulty metrics while they focus on qualitative discussions; (2) we derive our main task difficulty metric based on a mixture of SoTA RL algorithms, instead of random search, to better reflect the diversity of algorithm choices in practice; (3) we estimate reward entropies non-parametrically with many particles to reduce approximation errors, while their variance metric assumes Gaussianity of reward distributions and poorly approximates in the case of multi-modality; and (4) we verify the metrics on more diverse set of benchmark environments including OpenAI MuJoCo~\citep{gym2016openai} and DeepMind Control Suite~\citep{tassa2018deepmind} while they evaluate on classic control problems only.

\section{Policy and Policy-Optimal Information Capacity}
We now introduce our own proposed task complexity metrics. 
We begin with formal definitions for both metrics, and then provide details on how to estimate them.

\subsection{Formal Definitions}

\paragraph{Policy Information Capacity (PIC)} We define PIC as the mutual information $\mathcal{I}(\cdot;\cdot)$ between cumulative reward $R$ and policy parameter $\Theta$:
\begin{equation}
\mathcal{I}(R;\Theta) = \mathcal{H}(R) - \E_{p(\theta)}\left[\mathcal{H}(R|\Theta=\theta)\right],
\label{eq:mi_based_metric}
\end{equation}
where $\mathcal{H}(\cdot)$ is Shannon entropy. The intuitive interpretation is that when the environment gives a more diverse reward signal (first term in \autoref{eq:mi_based_metric}) and a more consistent reward signal per parameter (second term), it enables the agent to learn better behaviors.

\paragraph{Policy-Optimal Information Capacity (POIC)} We introduce the variant of PIC, termed \textit{Policy-Optimal Information Capacity (POIC)}, defined as the mutual information between the optimality variable and the policy parameter:
\begin{equation}
\mathcal{I}(\mathcal{O};\Theta) = \mathcal{H}(\mathcal{O}) - \E_{p(\theta)} \left[ \mathcal{H}(\mathcal{O}|\Theta=\theta) \right].
\label{eq:opt_metric}
\end{equation}

\subsection{Estimating Policy Information Capacity}
\label{sec:methods}
In this section, we describe a practical procedure for measuring PIC. In general, it is intractable to compute \autoref{eq:mi_based_metric} directly. The typical approach to estimate mutual information is to consider the lower bound~\cite{barber2004vmi,belghazi2018mine,poole2019vbmi}; however, if we estimate entropies in the one-dimensional reward space, we can use simpler techniques based on discretization~\citep{bellemare2017distributional}. 

We employ random policy sampling to measure mutual information between cumulative reward and parameter. Given an environment, a policy network $\pi_\theta$, and a prior distribution of the policy parameter $p(\theta)$, we generate $N$ particles of $\theta_i~(i = 1, \dots, N)\sim p(\theta)$ randomly and run the policy $\pi_{\theta_i}$ for $M$ episodes per particle (without any training). 
In total, we collect $NM$ trajectories and their corresponding cumulative rewards. We use $r_{ij}$ to denote the cumulative rewards of the $j$-th trajectory using $\theta_i$. 

We then empirically estimate \autoref{eq:mi_based_metric} via discretization of the empirical cumulative reward distribution for $p(r)$ and each $p(r|\theta_i)$ using the same $B$ bins. We set min and max values observed in sampling as the limit, and divide it into $B~(>M)$ equal parts: 

\begin{equation}
\begin{split}
\hat{\mathcal{I}}(R;\Theta) = & - \textstyle\sum_{b=1}^B \hat{p}(r_{b})\log\hat{p}(r_{b}) \\
& + \frac{1}{N} \textstyle\sum^{N}_{i=1} \sum_{b=1}^B \hat{p}(r_b|\theta_i) \log\hat{p}(r_b|\theta_i). 
\end{split}
\label{eq:reward_empowerment_sample}
\end{equation}
While there is an unavoidable approximation error when applying this estimator to continuous probability distributions, this approximation error can be reduced with sufficiently large $N,M,B$.
The sketch of this procedure is described in Algorithm \ref{alg:mu_rwg}.

\begin{algorithm}[tb]
\caption{PIC/POIC Estimation}
\label{alg:mu_rwg}
\renewcommand{\algorithmicrequire}{\textbf{Input:}}
\begin{algorithmic}[1]
\REQUIRE MDP $\mathcal{M}$, Policy $\pi$, Prior distribution of the parameter $p(\theta)$, Number of parameter $N$, Number of episodes $M$. Number of bins $B$.
\FOR{$i=1$ {\bfseries to} $N$}
 \STATE Generate parameter $\theta_i \sim p(\theta)$ and set it to $\pi$.
 \FOR{$j=1$ {\bfseries to} $M$}
    \STATE Initialize MDP $\mathcal{M}$.
    \STATE Run $\pi_{\theta_i}$ and Collect cumulative reward $r_{ij} \sim p(r|\theta_i)$.
 \ENDFOR
\ENDFOR
\STATE (for PIC) Approximate $p(r)$ and all $p(r|\theta_i)$ by the same discretization using $B$ bins.
\STATE Estimate PIC via~\autoref{eq:reward_empowerment_sample}, and/or POIC via~\autoref{eq:optimality_empowerment_sample}.
\end{algorithmic}
\end{algorithm}

\subsection{Estimating Policy-Optimal Information Capacity}
\autoref{eq:opt_metric} can be approximated by using the same samples from Algorithm~\ref{alg:mu_rwg}:
\begin{equation}
\begin{split}
& p(\mathcal{O}=1|\theta_i) \approx \frac{1}{M} \textstyle{\sum_{j=1}^{M}} \exp \left(\dfrac{r_{ij} - r_{\max}}{\eta}\right) := \hat{p}_{1i}; \\ \nonumber
&p(\mathcal{O}=1) \approx \frac{1}{N} \textstyle{\sum_{i=1}^{N}} \hat{p}_{1i} := \hat{p}_1, \\
\end{split}
\end{equation}
then,
\begin{equation}
\begin{split}
\hat{\mathcal{I}}(&\mathcal{O};~\Theta) = -\hat{p}_1\log\hat{p}_1 - (1 - \hat{p}_1)\log(1 - \hat{p}_1) \\
& + \frac{1}{N} \left(\textstyle\sum_{i=1}^{N} \hat{p}_{1i}\log\hat{p}_{1i} + (1-\hat{p}_{1i})\log(1-\hat{p}_{1i}) \right).\\
\end{split}
\label{eq:optimality_empowerment_sample}
\end{equation}
Compared to PIC, the exponential reward transform in POIC is more likely to favor reward maximization rather than minimization, which is preferred in a task solvability metric.
Additionally, since~\autoref{eq:optimality_empowerment_sample} is reduced to the entropies of discrete Bernoulli distributions, we can avoid reward discretization that is necessary for PIC~(\autoref{eq:reward_empowerment_sample}). However, since $\log p(\mathcal{O}=1|\theta)=\log \int_r p(\mathcal{O}=1|r)p(r|\theta)$, the sample-based estimators in~\autoref{eq:optimality_empowerment_sample} are biased (but asymptotically consistent).

\paragraph{Tuning temperature}
One disadvantage of POIC is that the choice of temperature $\eta$ can be arbitrary. To circumvent this, we choose the temperature which maximizes the mutual information: $\eta^* := \argmax_\eta \mathcal{I}(\mathcal{O};\Theta)$. In practice, we employ a black-box optimizer to numerically find it.

\subsection{Policy Selection and Policy Information Capacity}

Before proceeding to experiments, we theoretically explain why a high PIC might imply the ease of solving an MDP. Concretely, we consider how the PIC is related to the ease of choosing a better policy among two policies. Such a situation naturally occurs when random search or evolutionary algorithms are used.

We have the following proposition about relation between the PIC and the ease of policy selection. We say that a policy parameter $\theta_1$ is better than another policy parameter $\theta_2$ if its expected return $E_{p(r|\theta_1)}[r]~(:=\mu_{\theta_1})$ is larger than or equal to $E_{p(r|\theta_2)}[r]~(:=\mu_{\theta_2})$.

\begin{proposition}\label{prop:better policy determination}
Consider a situation where which policy parameter, $\theta_1$ or $\theta_2$, is better based on the order of $N$-sample-average estimates of expected returns, $\hat{\mu}_1 \approx E_{p(r|\theta_1)}[r]$ and $\hat{\mu}_2 \approx E_{p(r|\theta_2)}[r]$. Assume that $p(r | \theta) = \mathcal{N}(\mu_\theta, \sigma_\theta^2)$ for any $\theta \in \R^d$. Then the probability that we wrongly determine the order of $\theta_1$ and $\theta_2$ is at most
\begin{align*}
\E \left[ \exp \left( - \pi e N \left(\frac{\mu_{\theta_1} - \mu_{\theta_2}}{\exp (\mathcal{H}_1) + \exp( \mathcal{H}_2) } \right)^2 \right) \right],
\end{align*}
where the expectation is with respect to $\theta_1, \theta_2$, and $\mathcal{H}_i := \mathcal{H}(R|\Theta=\theta_i)$.
\end{proposition}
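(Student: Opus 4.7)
The plan is to condition on the pair $(\theta_1,\theta_2)$, bound the misordering probability by a Gaussian tail inequality, rewrite the variances in terms of the conditional entropies using the closed-form Gaussian entropy, and finally take an outer expectation over $(\theta_1,\theta_2)$.

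\textbf{Step 1 (set up the error event).} Without loss of generality assume $\mu_{\theta_1} \ge \mu_{\theta_2}$. The estimator $\hat{\mu}_i$ is a sample mean of $N$ i.i.d.\ draws from $\mathcal{N}(\mu_{\theta_i}, \sigma_{\theta_i}^2)$, so conditionally on $\theta_1,\theta_2$ the difference $\hat{\mu}_2 - \hat{\mu}_1$ is Gaussian with mean $\mu_{\theta_2}-\mu_{\theta_1}\le 0$ and variance $(\sigma_{\theta_1}^2+\sigma_{\theta_2}^2)/N$. A wrong ordering corresponds to the event $\{\hat{\mu}_2 > \hat{\mu}_1\}$, so I standardize and apply the one-sided Gaussian tail bound $\Pr(Z>t)\le \exp(-t^2/2)$ for $Z\sim\mathcal{N}(0,1)$ and $t\ge 0$, obtaining
\begin{equation*}
\Pr(\hat{\mu}_2 > \hat{\mu}_1 \mid \theta_1,\theta_2) \le \exp\!\left(-\frac{N(\mu_{\theta_1}-\mu_{\theta_2})^2}{2(\sigma_{\theta_1}^2+\sigma_{\theta_2}^2)}\right).
\end{equation*}

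\textbf{Step 2 (convert variances to entropies).} The entropy of $\mathcal{N}(\mu,\sigma^2)$ is $\tfrac{1}{2}\log(2\pi e \sigma^2)$, so under the Gaussian assumption $\sigma_{\theta_i}^2 = \exp(2\mathcal{H}_i)/(2\pi e)$. Using $a^2+b^2 \le (a+b)^2$ for $a,b\ge 0$ with $a=\exp(\mathcal{H}_1)$, $b=\exp(\mathcal{H}_2)$ gives
\begin{equation*}
\sigma_{\theta_1}^2+\sigma_{\theta_2}^2 \le \frac{(\exp(\mathcal{H}_1)+\exp(\mathcal{H}_2))^2}{2\pi e}.
\end{equation*}
Substituting this into the tail bound from Step~1 yields exactly
\begin{equation*}
\Pr(\hat{\mu}_2 > \hat{\mu}_1 \mid \theta_1,\theta_2) \le \exp\!\left(-\pi e N \left(\frac{\mu_{\theta_1}-\mu_{\theta_2}}{\exp(\mathcal{H}_1)+\exp(\mathcal{H}_2)}\right)^2\right).
\end{equation*}

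\textbf{Step 3 (outer expectation).} Taking the expectation over $\theta_1,\theta_2\sim p(\theta)$ on both sides of the conditional bound gives the claimed inequality. The case $\mu_{\theta_1} < \mu_{\theta_2}$ is symmetric, and squaring $\mu_{\theta_1}-\mu_{\theta_2}$ means the final bound is symmetric in the two parameters, so no extra bookkeeping is needed.

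\textbf{Anticipated obstacle.} The delicate point is matching the exact constant $\pi e$ in the exponent: this only comes out cleanly if one uses the sub-Gaussian tail bound with constant $1/2$ (rather than the sharper Mills-ratio form) together with the exact Gaussian entropy identity $\sigma^2=\exp(2\mathcal{H})/(2\pi e)$. The inequality $a^2+b^2\le(a+b)^2$ is the only slack introduced, and it is what converts $\exp(2\mathcal{H}_i)$ in the denominator into $(\exp(\mathcal{H}_1)+\exp(\mathcal{H}_2))^2$ so that the bound interprets as a signal-to-``entropic-noise'' ratio rather than a signal-to-variance ratio.
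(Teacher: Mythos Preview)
Your proof is correct and follows essentially the same route as the paper: apply the Gaussian (Chernoff) tail bound to $\hat{\mu}_1-\hat{\mu}_2$ conditioned on $(\theta_1,\theta_2)$, replace $\sigma_{\theta_i}$ by $\exp(\mathcal{H}_i)/\sqrt{2\pi e}$ via the Gaussian entropy formula, and take the outer expectation. The only cosmetic difference is that you make the loosening $\sigma_{\theta_1}^2+\sigma_{\theta_2}^2\le(\sigma_{\theta_1}+\sigma_{\theta_2})^2$ explicit, whereas the paper writes the bound directly with $(\sigma_{\theta_1}+\sigma_{\theta_2})^2$ in the denominator.
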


\begin{proof}
See Appendix~\ref{sec:appendix_proof_of_reward_empowerment_and_policy_selection}.
\end{proof}

Regarding the ease of policy selection, this proposition tells us that policy selection becomes easier when each $\mathcal{H}_i$ is small, and $\mu_{\theta_1}$ and $\mu_{\theta_2}$ are distant.
A high PIC $\gI(R;\Theta)$ suggests small $\mathcal{H}_i$ and a large distance between $\mu_{\theta_1}$ and $\mu_{\theta_2}$. Indeed, to keep $\gI(R;\Theta)$ high, the unconditional distribution of $r$ must be broad (high $\mathcal{H}(R)$), and a distribution of $r$ conditioned by each parameter must be narrow (low $\mathcal{H}_i$). To simultaneously achieve these two requirements, what can be done is narrowing each conditional distribution (low $\mathcal{H}_i$), and evenly scattering the conditional distributions over $\R$, resulting in a large distance between $\mu_{\theta_1}$ and $\mu_{\theta_2}$.

\section{Synthetic Experiments}
\label{sec:controllability_vs_maximizability}
As a way of motivating PIC and POIC, we introduce a simple setting in which both metrics correlate with task difficulty.

We aim to investigate the following two questions using simple MDPs in~\autoref{fig:multi_step_mdp} to build intuitions:
(1) Do PIC and POIC decrease as the conceptual difficulty of the MDP increases?
(2) How much do PIC and POIC change as the parameters of $p(\theta)$ are optimized during training (e.g. via evolutionary strategy)?
Additionally, we present comparisons between our information capacity measures and marginal entropies in \autoref{sec:appendix_synthetic}.

We assume the following simple MDP: 
\begin{itemize}[leftmargin=0.4cm,topsep=0pt,itemsep=0pt]
    \item The set of states is given by $\mathcal{S}=\{s_1, s_2, s_3, s_4, s_5\}$. The initial state is $s_1 = [1, 0, 0]$ while the other state vectors are $s_2=[0,1,0], s_3=[0,0,1], s_4=[1,1,1], s_5=[0,0,0]$. 
    \item The action space is $\mathcal{A}=\{a_1, a_2\}$, and the parameterized policy $\pi_{\theta}(a|s)$ for $\theta \in \R^3$ is given by
\begin{equation}
\pi_{\theta}(a|s) = \begin{cases}
\text{sigmoid}(\theta^{\mathrm{T}}s) & (a=a_1)\\
1 - \text{sigmoid}(\theta^{\mathrm{T}}s) & (a=a_2).
\nonumber
\end{cases}
\end{equation}
    \item The transitions are deterministic as illustrated in \autoref{fig:multi_step_mdp}.
    \item We consider three possible reward functions: $r_A,r_B,r_C$. For $r_A$, we have $r_A(s_2)=1$ and $r_A(s)=0$ otherwise. For $r_B$, we have $r_B(s_4)=1$, and $r_B(s)=0$ otherwise. For $r_C$, we have $r_C(s_5)=1$, and $r_C(s)=0$ otherwise.
\end{itemize}
We consider variants of this MDP according to horizon $T\in\{1, 2, 3\}$ and we pair each choice of horizon with a reward function; i.e. horizon $T=1$ is associated with $r_A$, horizon $T=2$ is associated with $r_B$, and horizon $T=3$ is associated with $r_C$.
We can describe this MDP as $\mathcal{M}=\{\mathcal{S}, \mathcal{A}, r(s), T\}$.
We take the policy parameter prior to be a Gaussian distribution $p(\theta)=\mathcal{N}(\mu, \sigma^2I)$, where $\mu,\sigma$ are hyper-parameters.

\begin{figure}[ht]
\centering
\begin{tikzpicture}[auto,font=\small]
\node(A2_0)[blue] at (-2.1,0.2) {$a_{2}$};
\node(A1_0)[red] at (-0.85,0.2) {$a_{1}$};
\node(A2_1)[blue] at (0.65,0.2) {$a_{2}$};
\node(A1_1)[red] at (0.0, 1.2) {$a_{1}$};
\node(A2_2)[blue] at (1.5, 1.2) {$a_{2}$};
\node(A1_2)[red] at (2.2, 0.2) {$a_{1}$};
\node(R0)[scale=0.8] at (-3,-0.7) {$r(s_3)=0$};
\node(R1_1)[scale=0.8,align=center] at (0,-0.7) {$r_A(s_2)=1$\\($T=1$)};
\node(R1_2)[scale=0.8,align=center] at (1.5,-0.7) {$r_B(s_4)=1$\\($T=2$)};
\node(R1_3)[scale=0.8,align=center] at (3.0,-0.7) {$r_C(s_5)=1$\\($T=3$)};
\tikzstyle{round}=[draw=black,circle,minimum size=0.5cm]
\node[round] (S1) at (-1.5,0) {$s_{1}$};
\node[round] (S2) at (0,0) {$s_{2}$};
\node[round] (S3) at (-3.0,0) {$s_{3}$};
\node[round] (S4) at (1.5,0) {$s_{4}$};
\node[round] (S5) at (3.0,0) {$s_{5}$};
\draw[->] (S3) [out=140,in=210,loop] to node {} node [swap] {} (S3);
\draw[->] (S2) [out=55,in=125,loop] to node {} node [swap] {} (S2);
\draw[->] (S4) [out=55,in=125,loop] to node {} node [swap] {} (S4);
\draw[->] (S1) to node {} node [swap] {} (S3);
\draw[->] (S1) to node {} node [swap] {} (S2);
\draw[->] (S2) to node {} node [swap] {} (S4);
\draw[->] (S4) to node {} node [swap] {} (S5);
\end{tikzpicture}
\vskip -0.15in
\caption{Multi-step discrete MDP. The transition is deterministic and $s_3$ is an absorbing state. The reward function is determined by horizon $T$. Intuitively, this MDP becomes difficult when the horizon is longer.}
\label{fig:multi_step_mdp}
\end{figure}
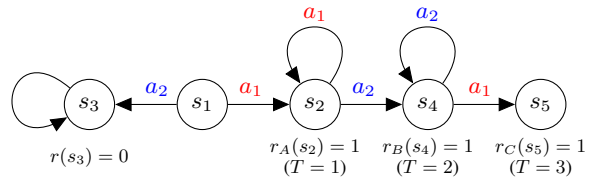

\paragraph{Answer to (1)}
We measure both information capacities and normalized score in multi-step MDPs with different horizon $T=1,2,3$ via Algorithm~\ref{alg:mu_rwg}.
The normalized score is defined as: $\frac{r_{\text{ave}} - r_{\min}}{r_{\max}- r_{\min}}$, where $r_{\text{ave}}, r_{\min}, r_{\max}$ are the average, maximum, and minimum return over parameters.
Intuitively, the MDP in~\autoref{fig:multi_step_mdp} becomes more difficult when the horizon gets longer.
We set prior parameter as $\mu=0$ and $\sigma=1$.
We sample 1,000 parameters from the prior, and evaluate each of them with 1,000 episode per parameter~(i.e. $N=1000$ and $M=1000$).
The results appear in~\autoref{table:multi_mi_normalized}.
We observe that PIC and POIC get lower when the horizon gets longer. The longer horizon MDP leads to a lower normalized score, which means that PIC and POIC properly reflect the task solvability of the MDP.

\begin{table}[htb]
\begin{center}
\begin{small}
\begin{tabular}{l|ccc}
\toprule
Horizon & Normalized Score & PIC & POIC \\
\midrule
$T=1$ & 0.451 & 0.087 & 0.087 \\
$T=2$ & 0.253 & 0.064 & 0.062 \\
$T=3$ & 0.112 & 0.050 & 0.049 \\
\bottomrule
\end{tabular}
\end{small}
\end{center}
\vskip -0.15in
\caption{The relations between normalized score by random behavior and PIC $\hat{\mathcal{I}}(R;\Theta)$ or POIC $\hat{\mathcal{I}}(\mathcal{O};\Theta)$ on simple MDP with different horizon~(\autoref{fig:multi_step_mdp}). All of them use Gaussian prior $\mathcal{N}(0,I)$. The lower normalized score, the lower information capacity metrics are.}
\label{table:multi_mi_normalized}
\end{table}

\paragraph{Answer to (2)}
Our metrics are intrinsically \textit{local}, in that it assumes some $p(\theta)$ for estimation. A natural question is, what happens to these metrics throughout a realistic learning process?
To answer this, we optimize $\mu$ in $p(\theta)$ to solve the MDP by evolution strategy~\cite{salimans2017evolution}, and observe PIC, POIC, and the agent performance during training. 
We assume the Gaussian prior $\mathcal{N}(\mu, I)$, and vary $\mu$ initializations using $[-10, -5, -4, -3, -2, -1, 0, 1, 2, 3, 4, 5, 10]$.
We set $N=1000$ and $M=1000$, and horizon is $T=3$.
\autoref{fig:opt_2_three_step_mdp_ps10_learning_curve2} presents the results of $\mu=0, -3.0, -4.0, -5.0$ (for the visibility); the rest of results and the case of PIC are shown in \autoref{sec:appendix_synthetic}.
These results confirm that the initial prior with high POIC ($\mu=0$) actually solves the environment faster than those with low POIC ($\mu=-3.0, -4.0, -5.0$).
Interestingly, \autoref{fig:opt_2_three_step_mdp_ps10_learning_curve2} also shows that high POIC effectively corresponds to regions of fastest learning. This allows us to build an intuition about what happens during learning: in parameter regions with high POIC learning accelerates, and in those with low POIC learning slows down, if $p(\theta)$ corresponds approximately to each local search region per update step.
As for PIC, the same trends can be observed in \autoref{sec:appendix_synthetic}.
We additionally provide the further examples on more complex environments like classic control or MuJoCo tasks in \autoref{sec:appendix_es_empowerments}.

\begin{figure}[ht]
\centering
\includegraphics[width=\linewidth]{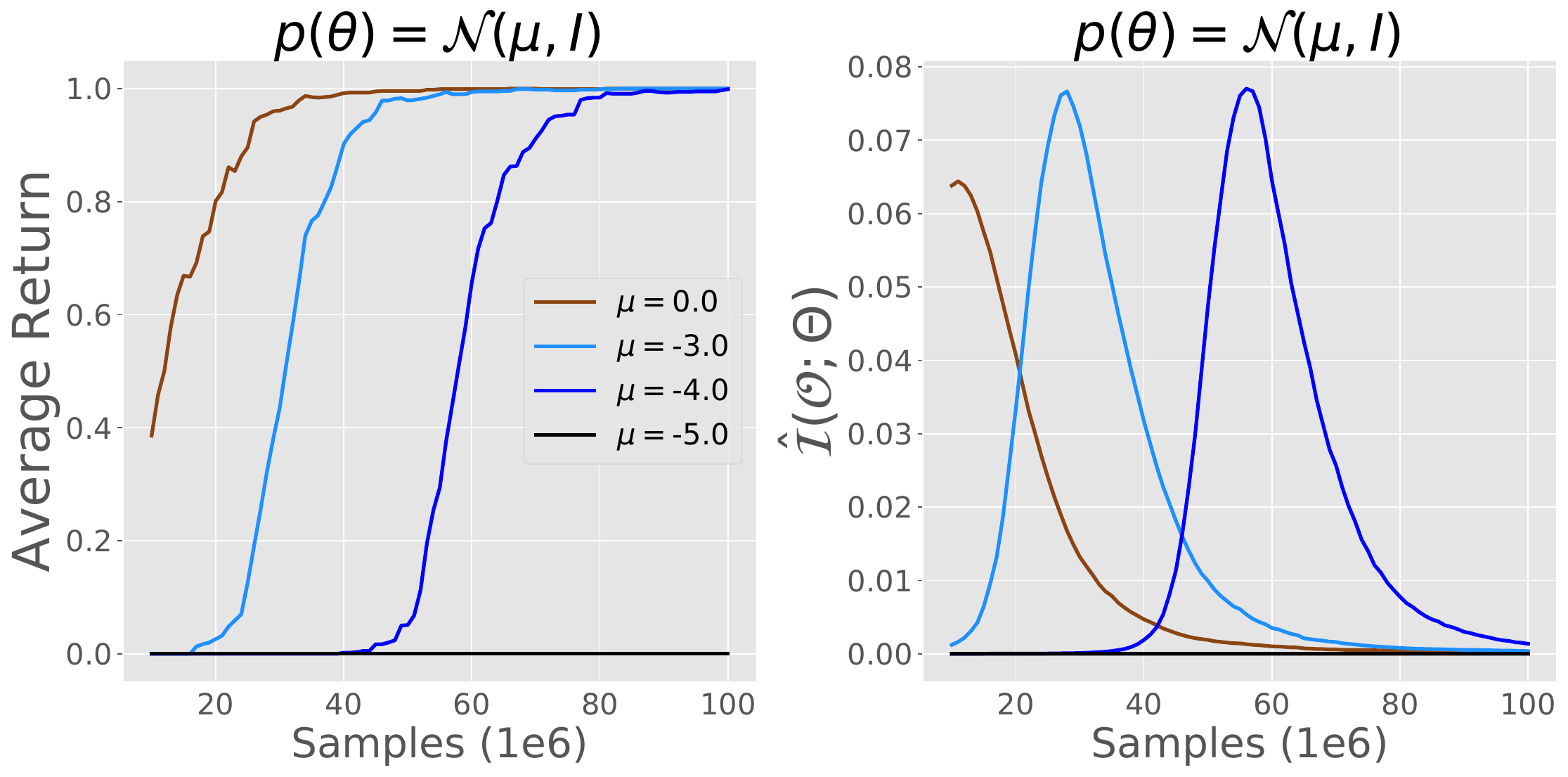}  
\vskip -0.05in
\caption{Average return (left) and POIC (right) during the training of evolution strategies. We vary the parameter of initial prior distribution $\mu$ and optimize it. High POIC correctly corresponds to regions of fastest learning.
}
\label{fig:opt_2_three_step_mdp_ps10_learning_curve2}
\end{figure}

\section{Deep RL Experiments}
\label{sec:Experiments}
In this section, we begin by elaborating on how we derive a brute-force task complexity metric to serve as a ground-truth metric to compare with. Then, we study the following questions: (1) Do PIC and POIC metrics correlate well with task complexity across standard deep RL benchmarks such as OpenAI Gym~\cite{gym2016openai,todorov2012mujoco} and DM Control~\citep{tassa2018deepmind}?
(2) Are PIC and POIC more correlated with task complexity than other possible metrics (entropy or variance of returns~\cite{oller2020analyzing})?
(3) Can both PIC and POIC be used to evaluate and tune goodness of reward shaping, network architectures, and parameter initialization without requiring running full RL training?

\subsection{Defining and Estimating Brute-Force Task Complexity Measure}
\label{sec:norm_task_scores}
While an oracle metric for task solvability on complex RL environments seems intractable, one possible (but costly) alternative is to run a large set of diverse RL algorithms and evaluate their normalized average performances.
On any given environment, some of these algorithms may completely and efficiently solve the task while others may struggle to learn; an appropriate averaging of the performances of the algorithms can serve as a ``ground-truth'' task complexity score.
As the preparation for the following experiments, we will pre-compute these normalized average performances.

\paragraph{Collecting Raw Algorithm Performances}
First, we prepare a bag of algorithms (and hyper-parameters) for learning and execute them all. We treat three types of environments separately; classic control, MuJoCo~\citep{gym2016openai}, and DM Control~\citep{tassa2018deepmind}.
For classic control, we run 23 algorithms, based on PPO~\citep{Schulman2017PPO}, DQN~\citep{mnih2015human} and Evolution Strategy with different hyper-parameters for discrete-action, and PPO, DDPG~\citep{Lillicrap2016}, SAC~\cite{haarnoja2018sacapps}, and Evolution Strategy with different hyper-parameters for continuous-action space environments.
For MuJoCo and DM Control, we run SAC, MPO~\citep{Abdolmaleki2018mpo} and AWR~\citep{peng2019awr}, and to simulate more diverse set of algorithms, we additionally incorporated the leaderboard scores reported in previous SoTA works~\cite{fujimoto2018td3,peng2019awr,laskin_lee2020rad}.
See Appendix~\ref{sec:appendix_normalized_score_algo} for further details.

\paragraph{Computing Normalized Scores (Algorithm)}
After collecting raw performances, we compute the average return over the all algorithms $r_{\text{ave}}^{\text{algo}}$.
For the comparison, we need to align the range of reward that is different from each environment.
To normalize average return over the environments, we take the maximum between this algorithm-based and random-sampling-based maximum scores (explained below), and use the minimum return obtained by random policy sampling:
\begin{equation}
\text{Normalized Score} := \frac{r_{\text{ave}}^{\text{algo}} - r_{\min}^{\text{rand}}}{\max(r_{\max}^{\text{rand}},~r_{\max}^{\text{algo}}) - r_{\min}^{\text{rand}}}.  \nonumber
\end{equation}
As a sanity check, we checked that the task scores do not trivially correlate with obvious properties of MDP or policy networks, such as state, action dimensionalities; episodic horizon in Appendix~\ref{sec:appendix_obvious_propaties}.

\paragraph{Computing Normalized Scores (Random Sampling)}
In addition to ``bag-of-algorithms'' task scores, we also compute random-sampling-based task scores that are considered in~\citet{oller2020analyzing} for characterizing task difficulties; see Appendix~\ref{sec:appendix_normalized_score_rs} for more details. While this metric is easier to compute, this only measures the task difficulty of an environment with respect to random search algorithm, and ignores the availability of more advanced RL algorithms. 

\subsection{Evaluating PIC and POIC as Task Complexity Measures}
\label{sec:mi_on_benchmark}
To verify that our MI-based metrics perform favorably as task solvability metrics in practical settings, we measure both PIC and POIC, along with other alternative metrics, following the random-sampling protocol in Algorithm~\ref{alg:mu_rwg} on the standard RL benchmarks: CartPole, Pendulum, MountainCar, MountainCarContinuous, and Acrobot from classic control in Open AI Gym; HalfCheetah, Walker2d, Humanoid, and Hopper from MuJoCo tasks; cheetah run, reacher easy, and ball\_in\_cup catch in DM Control~(see~\autoref{sec:env_details}).

We prepare a ``bag-of-policy-architectures'' to model a realistic prior over policy functions practitioners would use: ([0] layers $+$ [1, 2] layers $\times$ [4, 32, 64] hidden units) $\times$ [Gaussian prior $\mathcal{N}(0, I)$, Uniform prior $\textit{Unif}(-1,1)$, Xavier Normal, Xavier Uniform] $\times$ [w/ bias, w/o bias]; totally 56 variants of architectures\footnote{\citet{oller2020analyzing} by contrast only studied smaller networks, such as 2 layers with 4 units, which were sufficient for their classic control experiments, but certainly would not be for our MuJoCo environments.}.
We sample 1,000 parameters from the prior for each architecture, and evaluate each of them with 1,000 episode per parameter ($N=56\times1000=56000$ and $M=1000$) for random policy sampling.
The number of bins for discretization is set to $B=10^5$ for surely maximizing PIC.
To compare the suitability of our information capacity metrics and Shannon entropy or variance as task solvability metrics, we compute Pearson correlation coefficients between these measures and the normalized scores for the quantitative evaluation.

\begin{figure*}[ht]
\centering
\includegraphics[width=\linewidth]{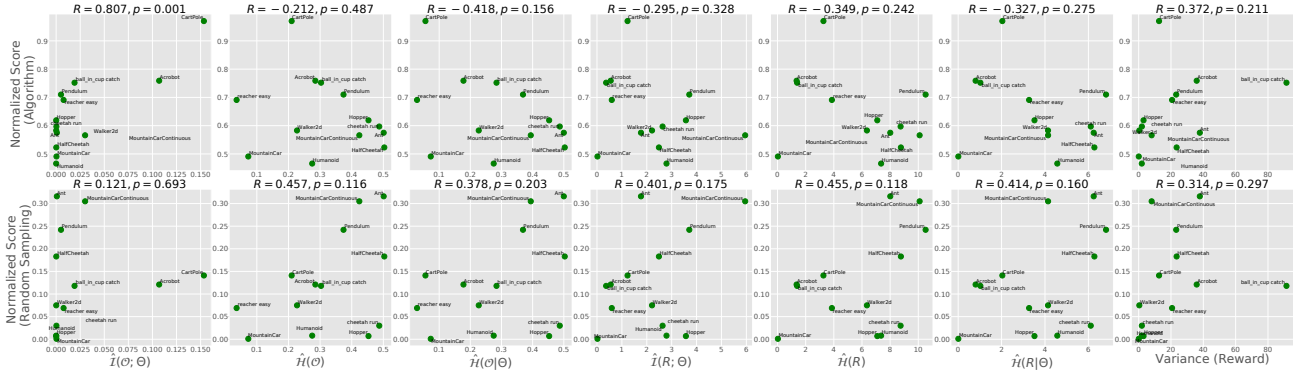}  
\vskip -0.05in
\caption{2D-Scatter plots between each metric (x-axis) and normalized scores (algorithm-based (top) and random-sampling-based (bottom) ; y-axis); see~\autoref{table:empowerment_full} for labeled values with environment names. Variance (last column) approximately corresponds to the metric proposed by~\citet{oller2020analyzing}.
\textbf{POIC ($\hat{\mathcal{I}}(\mathcal{O};\Theta)$) positively correlates with algorithm-based normalized score} ($\bf{R=0.807}$; statistically significant with $p<0.01$), the more realistic of the two task difficulty scores, more than all other alternatives including variance of returns~\citep{oller2020analyzing}. Note that the two normalized task scores have a low correlation of $0.139$ (see~\autoref{fig:peason_correlation_coefficient_sub}), and therefore a high correlation in one means a low correlation in the other.}
\label{fig:peason_correlation_coefficient}
\end{figure*}

\autoref{fig:peason_correlation_coefficient} visualizes the relation between metrics computed via random sampling (Algorithm \ref{alg:mu_rwg}) and normalized scores (see \autoref{table:empowerment_full} in \autoref{sec:appendix_full_results} for the detailed scores). Note that variance of returns is scaled by $r_{\max}^{\text{rand}} - r_{\min}^{\text{rand}}$ for a normalized comparison among different environments.
The results suggest that POIC seems to positively correlate better with algorithm-based normalized score ($R=0.807$; statistically significant with $p<0.01$) compared to any other alternatives, such as reward marginal entropy ($R=-0.349$) or variance of returns ($R=0.372$)~\citep{oller2020analyzing}.
Here, we can see that POIC seems to work as a task solvability metric in standard RL benchmarks.
In contrast, PIC is correlated with random-sampling-based normalized score ($R=0.401$) and superior to variance of returns ($R=0.314$). However, it is less correlated with algorithm-based task scores, which seems closer to actual task difficulty.
These differences are possibly due to maximization bias in optimality variable from exponential transform and estimation in Bernoulli space\footnote{The actual behavior of PIC and POIC during RL training might be also related to it (see \autoref{sec:appendix_es_empowerments}).}.

\subsection{Evaluating the Goodness of Reward Shaping and Other ``Hyper-Parameters'' without RL}
\label{sec:exp_reward_shaping}
We additionally test whether both PIC and POIC can directly evaluate the goodness of reward shaping properly. We investigate two goal-oriented tasks, Reacher~\citep{todorov2012mujoco} and Pointmaze~\citep{fu2020d4rl}, where both tasks are defined based on distance functions. We prepare the following four families of distance reward functions: L1 norm, L2 norm, Fraction, and Sparse, each with 1 or 2 hyper-parameters.
We select 4 hyper-parameter values for each, totaling 16 different reward functions.
To get the normalized scores, we run PPO with 500k steps and 5 seeds for each (see \autoref{sec:effect_on_reward_shaping} for the details).

\autoref{fig:reward_shaping} reveals that Pearson correlation coefficients between the normalized score and information capacity metrics have positive correlations (statistically significant with $p<0.05$). Coincidentally, clustering of L1 and L2 results reveals that these two reward families are much more robust to ill-specified reward hyper-parameters (i.e. require much less hyper-parameter tuning) than Fraction and Sparse, which is expected given the smoothness and low curvatures of L1 and L2.
The results show that \textit{both} PIC and POIC can evaluate the goodness of reward shaping for optimizability\footnote{Typically, separately from reward shaping, there is a true task reward  (success metric). However, in our definition of task difficulty, we only measure how easy to optimize the given reward (shaped reward). In practice, one should choose a shaped reward that is both easy to optimize (e.g. based on our metric) and accurately reflecting the true task success.}.
We run additional experiments to evaluate the goodness of architectures, initializations, and dynamics noises. See \autoref{sec:mi_with_different_archi} and \ref{sec:appendix_dynamics_initialization} for the details.

\label{sec:mi_with_reward_shaping}
\begin{figure*}[ht]
\centering
\includegraphics[width=\linewidth]{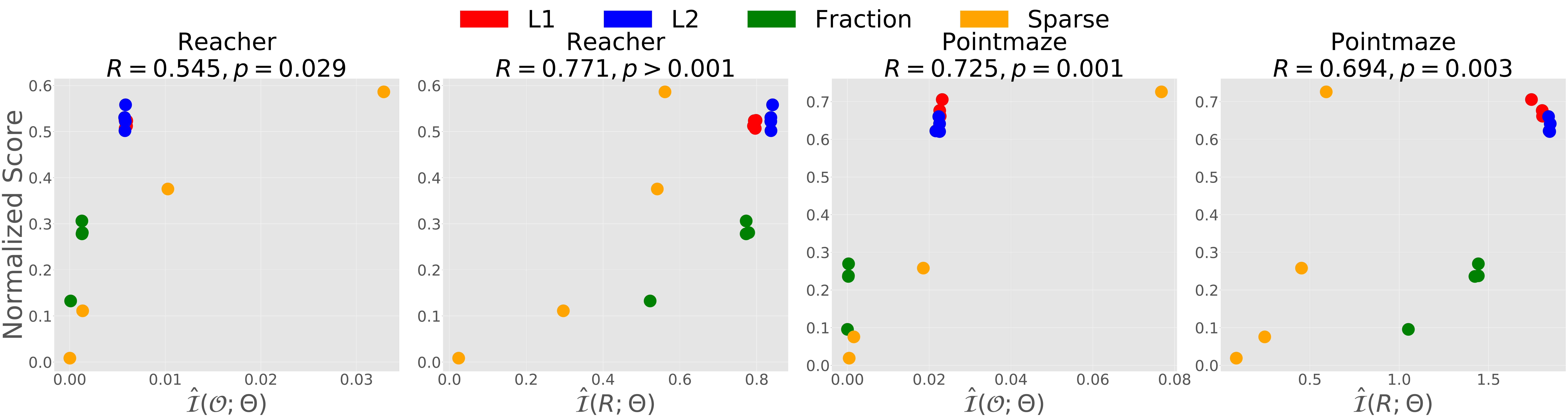}  
\vskip -0.05in
\caption{2D-Scatter plots between PIC or POIC~(x-axis) and normalized score (y-axis), after 500k training steps and averaged with 5 seeds. We test 4 family of reward functions (L1 norm, L2 norm, Fraction, Sparse) and 4 variants each (see \autoref{sec:effect_on_reward_shaping} for the details).
The results indicate that \textit{both} PIC and POIC can be used to evaluate the goodness of reward shaping proposals. The clusterings of L1 and L2 rewards indicate that learning difficulty with those reward functions does not vary much with hyper-parameters, and suggest that they do not require much hyper-parameter tuning in practice for these tasks.}
\label{fig:reward_shaping}
\end{figure*}


\section{Limitation and Future Work}
We tackle a seemingly intractable problem: \textit{to quantify the difficulty of an RL environment irrespective of learning algorithms}. While our empirical evaluations present many encouraging positive results, our metrics  have obvious limitations. The biggest is the dependence on $p(\theta)$. As discussed in Section~\ref{sec:controllability_vs_maximizability}, $p(\theta)$ intuitively defines an effective search area (in function space or function parameter space) and our two information capacity metrics approximately measure the easiness of searchability and maximizability respectively within this area weighted by a prior search distribution. Since these metrics in our experiments are only measured at initialization (except results in \autoref{fig:opt_2_three_step_mdp_ps10_learning_curve2} and \autoref{sec:appendix_synthetic}), they may not correspond well with overall task solvability: (1) if the optimization landscape drastically changes in the later stages of learning, particularly likely for neural networks~\citep{li2017visualizing} or MDPs with discontinuous rewards or dynamics; or (2) if $p(\theta)$ poorly approximates actual search directions given by SGD, natural gradients, Q-learning, etc., during learning. Exploring our metrics throughout the dynamics of optimization and learning to adapt $p(\theta)$ are some of the exciting future directions, along with making connections to works in supervised learning that relate convergence to signal-to-noise ratios in gradient estimators~\citep{saxe2019information,smith2017don,smith2017bayesian}. Another important direction is to scale the evaluations to problems requiring larger neural networks, like ALE with image observations~\citep{Bellemare2013ALE}.     

\section{Conclusion}
\label{sec:conclusion}
We defined \textit{policy information capacity (PIC)} and \textit{policy-optimal information capacity (POIC)} as information-theoretic metrics for numerically analyzing the \textit{generic} task complexities of RL environments.
These metrics are simple and practical: estimating these metrics only requires a prior distribution over policy functions $p(\theta)$ and trajectory sampling.
We formalized a quantitative evaluation protocol for verifying the correctness of task difficulty metrics, that properly accounts for both the richness of available RL algorithms and the complexities of high-dimensional benchmark environments.  
Through careful experimentation, we successfully identified POIC as the only metric that exhibited high correlations with a brute-force measurement of environment complexity, and demonstrated that these PIC and POIC metrics can be used for tuning task parameters such as reward shaping, MDP dynamics, network architecture and initialization for best learnability without running RL experiments.  
We hope our work can inspire future research to further explore these long overdue questions of analyzing, measuring, and categorizing the properties of RL \textit{environments}, which can guide us to developing even better learning algorithms.

\section*{Acknowledgements}
We thank Yusuke Iwasawa, Danijar Hafner, and Karol Hausman for helpful feedback on this work.

\bibliography{icml2021}
\bibliographystyle{icml2021}

\clearpage
\onecolumn

\appendix
\section*{Appendix}
\setcounter{section}{0}
\renewcommand{\thesection}{\Alph{section}}

\section{Details of Environments}
\label{sec:env_details}
In this section, we explain the details of the environments used in our experiments. The properties of each environment are summarized in \autoref{table:environment_details}.

\begin{table*}[ht]
\vskip 0.15in
\begin{center}
\begin{small}
\begin{tabular}{l|cccc}
\toprule
Environment & State dim & Action dim & Control & Episode Length \\
\midrule
CartPole-v0 & 4 & 2 & Discrete & 200 \\
Pendulum-v0 & 3 & 1 & Continuous & 200 \\
MountainCar-v0 & 2 & 3 & Discrete & 200 \\
MountainCarContinuous-v0 & 2 & 1 & Continuous & 999 \\
Acrobot-v1 & 6 & 3 & Discrete & 500 \\
\midrule
Ant-v2 & 111 & 8 & Continuous & 1000 \\
HalfCheetah-v2 & 17 & 6 & Continuous & 1000 \\
Hopper-v2 & 11 & 3 & Continuous & 1000 \\
Walker2d-v2 & 17 & 6 & Continuous & 1000 \\
Humanoid-v2 & 376 & 17 & Continuous & 1000 \\
\midrule
cheetah run & 17 & 6 & Continuous & 1000 \\
reacher easy & 6 & 2 & Continuous & 1000 \\
ball\_in\_cup catch & 8 & 2 & Continuous & 1000 \\
\midrule
Reacher & 11 & 2 & Continuous & 50 \\
Pointmaze & 4 & 2 & Continuous & 150 \\
\bottomrule
\end{tabular}
\end{small}
\end{center}
\vskip -0.15in
\caption{The details of Open AI Gym and DeepMind Control Suite Environments used in this paper.} 
\label{table:environment_details}
\end{table*}

\paragraph{CartPole} The states of CartPole environment are composed of 4-dimension real values, cart position, cart velocity, pole angle, and pole angular velocity. The initial states are uniformly randomized.
The action space is discretized into 2-dimension.
The cumulative reward corresponds to the time steps in which the pole isn't fallen down. 

\paragraph{Pendulum} The states of Pendulum environment are composed of 3-dimension real values, cosine of pendulum angle $\cos{\varphi}$, sine of pendulum angle $\sin{\varphi}$, and pendulum angular velocity $\dot{\varphi}$. The initial states are uniformly randomized.
The action space is continuous and 1-dimension.
The reward is calculated by the following equation;
\begin{equation}
r_t = -(\varphi_t^2 + 0.1 * \dot{\varphi_t}^2 + 0.001 * \|\bm{a}_t\|_2^2) \nonumber.
\end{equation}

\paragraph{MountainCar} The states of MountainCar environment are composed of 2-dimension real values, car position, and its velocity. The initial states are uniformly randomized.
The action space is discretized into 3-dimension.
The cumulative reward corresponds to a negative value of the time steps in which the car doesn't reach the goal region. 

\paragraph{MountainCarContinuous} The 1-dimensional continuous action space version of MountainCar environment. The reward is calculated by the following equation;
\begin{equation}
r_t = - 0.1 * \|\bm{a}_t\|_2^2 + 100 * \mathbbm{1}[x_t \geq x_g~\text{and}~\dot{x}_t \geq \dot{x}_g] \nonumber.
\end{equation}

\paragraph{Acrobot} The states of Acrobot environment consist of 6-dimension real values, sine, and cosine of the two rotational joint angles and the joint angular velocities. The initial states are uniformly randomized.
The action space is continuous and 3-dimension.
The cumulative reward corresponds to a negative value of the time steps in which the end-effector isn't swung up at a height at least the length of one link above the base.

\paragraph{Ant}
The state space is 111-dimension, position and velocity of each joint, and contact forces. The initial states are uniformly randomized. The action is an 8-dimensional continuous space. The reward is calculated by the following equation;
\begin{equation}
r_t = \dot{x}_t - 0.5 * \|\bm{a}_t\|_2^2 - 0.0005 * \|\bm{s}_t^{\text{contact}}\|_2^2 + 1 \nonumber.
\end{equation}

\paragraph{HalfCheetah}
The state space is 17-dimension, position and velocity of each joint. The initial states are uniformly randomized. The action is a 6-dimensional continuous space. The reward is calculated by the following equation;
\begin{equation}
r_t = \dot{x}_t - 0.1 * \|\bm{a}_t\|_2^2 \nonumber.
\end{equation}

\paragraph{Hopper}
The state space is 11-dimension, position and velocity of each joint. The initial states are uniformly randomized. The action is a 3-dimensional continuous space. This environment is terminated when the agent falls down. The reward is calculated by the following equation;
\begin{equation}
r_t = \dot{x}_t - 0.01 * \|\bm{a}_t\|_2^2 + 1 \nonumber.
\end{equation}

\paragraph{Walker2d}
The state space is 17-dimension, position and velocity of each joint. The initial states are uniformly randomized. The action is a 6-dimensional continuous space. This environment is terminated when the agent falls down. The reward is calculated by the following equation;
\begin{equation}
r_t = \dot{x}_t - 0.01 * \|\bm{a}_t\|_2^2 + 1 \nonumber.
\end{equation}

\paragraph{Humanoid}
The state space is 376-dimension, position and velocity of each joint, contact forces, and friction of actuator. The initial states are uniformly randomized. The action is a 17-dimensional continuous space. This environment is terminated when the agent falls down. The reward is calculated by the following equation;
\begin{equation}
r_t = 1.25 * \dot{x}_t - 0.1 * \|\bm{a}_t\|_2^2 - \min (5e^{-7} * \|\bm{s}_t^{\text{contact}}\|_2^2,~10) + 5 \nonumber.
\end{equation}

\paragraph{cheetah run}
The state space is 17-dimension, position and velocity of each joint. The initial states are uniformly randomized. The action is a 6-dimensional continuous space. The reward is calculated by the following equation;
\begin{equation}
  r_t =\begin{cases}
    0.1 * \dot{x}_t & (0 \leq \dot{x}_t \leq 10) \\
    1 & (\dot{x}_t > 10). \nonumber
  \end{cases}
\end{equation}

\paragraph{reacher easy}
The state space is 6-dimension, position and velocity of each joint. The initial states are uniformly randomized. The action is a 2-dimensional continuous space. The reward is calculated by the following equation;
\begin{equation}
r_t = \mathbbm{1}[x_t - x_g \leq \epsilon] \nonumber.
\end{equation}

\paragraph{ball\_in\_cup catch}
The state space is 8-dimension, position and velocity of each joint. The initial states are uniformly randomized. The action is a 2-dimensional continuous space. The reward is calculated by the following equation;
\begin{equation}
r_t = \mathbbm{1}[x_t - x_g \leq \epsilon] \nonumber.
\end{equation}

\paragraph{Reacher}
The state space is 11-dimension, position and velocity of each joint. The initial states are uniformly randomized. The action is a 2-dimensional continuous space.
We modify the reward function for the experiments (see~\autoref{sec:effect_on_reward_shaping}).

\paragraph{Pointmaze}
We use the implementation provided by \citet{fu2020d4rl}~(\autoref{fig:pointmaze}). 
The state space is 4-dimension, position and velocity. The initial states are uniformly randomized. The action is a 2-dimensional continuous space.
We modify the reward function for the experiments (see~\autoref{sec:effect_on_reward_shaping}).

\clearpage
\section{Additional Details of Synthetic Experiments}
\label{sec:appendix_synthetic}
We provide the rest of the results, presented in Section~\ref{sec:controllability_vs_maximizability}, and then show additional comparisons between PIC or POIC and marginal entropies in this setting.

\subsection{Answer to (2) in Section~\ref{sec:controllability_vs_maximizability}}
First, \autoref{fig:opt_2_three_step_mdp_ps10_learning_curve} shows the full results of \autoref{fig:opt_2_three_step_mdp_ps10_learning_curve2}, which holds the same tendency overall.

Similar to the results of POIC, presented in Section~\ref{sec:controllability_vs_maximizability}, we also investigate what happens to PIC throughout a realistic learning process, optimizing $\mu$ in $p(\theta)$ to solve the MDP by evolution strategy and observing this metric and performance of the agent during training.
Following the experimental settings of POIC, we assume the Gaussian prior, $\mathcal{N}(\mu, I)$, and vary $\mu$ initializations using $[-10, -5, -4, -3, -2, -1, 0, 1, 2, 3, 4, 5, 10]$.
We set $N=M=1000$, and horizon is $T=3$.
\autoref{fig:three_step_mdp_ps10_learning_curve} reveals that the initial prior with high PIC (e.g. $\mu=0$) actually solves the environment faster than those with low PIC (e.g. $\mu=-3.0, -4.0, -5.0$), which seems the same as the case of POIC.
Interestingly, \autoref{fig:three_step_mdp_ps10_learning_curve} also shows that high PIC corresponds to regions of fastest learning.

\begin{figure}[ht]
\centering
\includegraphics[width=0.8\linewidth]{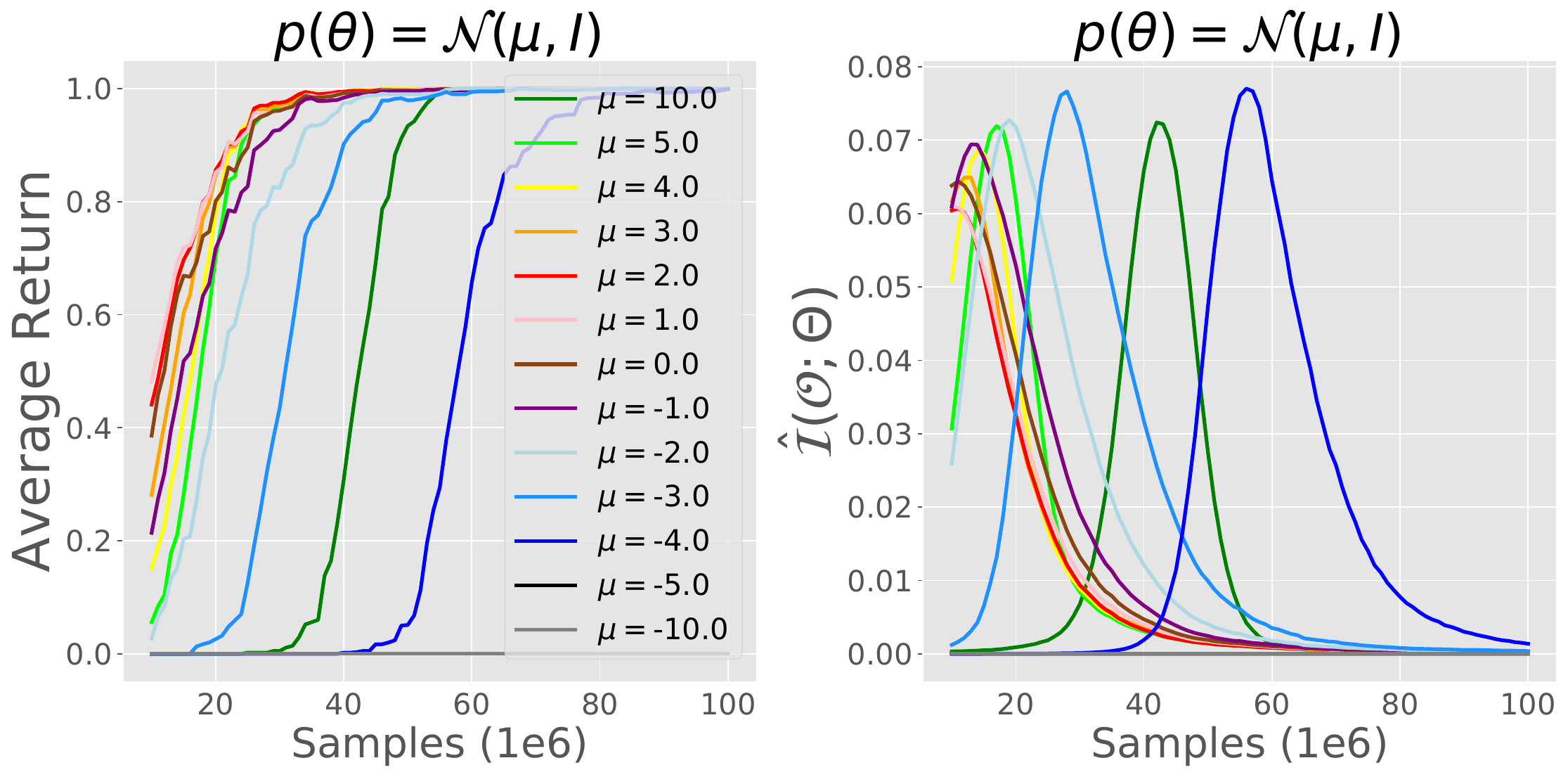}  
\vskip -0.05in
\caption{Average return (left) and POIC (right) during the training of evolution strategies. We change the parameter of initial prior distribution $\mu$ and optimize it.}
\label{fig:opt_2_three_step_mdp_ps10_learning_curve}
\end{figure}

\begin{figure*}[ht]
\begin{center}
\begin{tabular}{c}
\begin{minipage}{0.48\hsize}
\begin{center}
\includegraphics[width=\hsize]{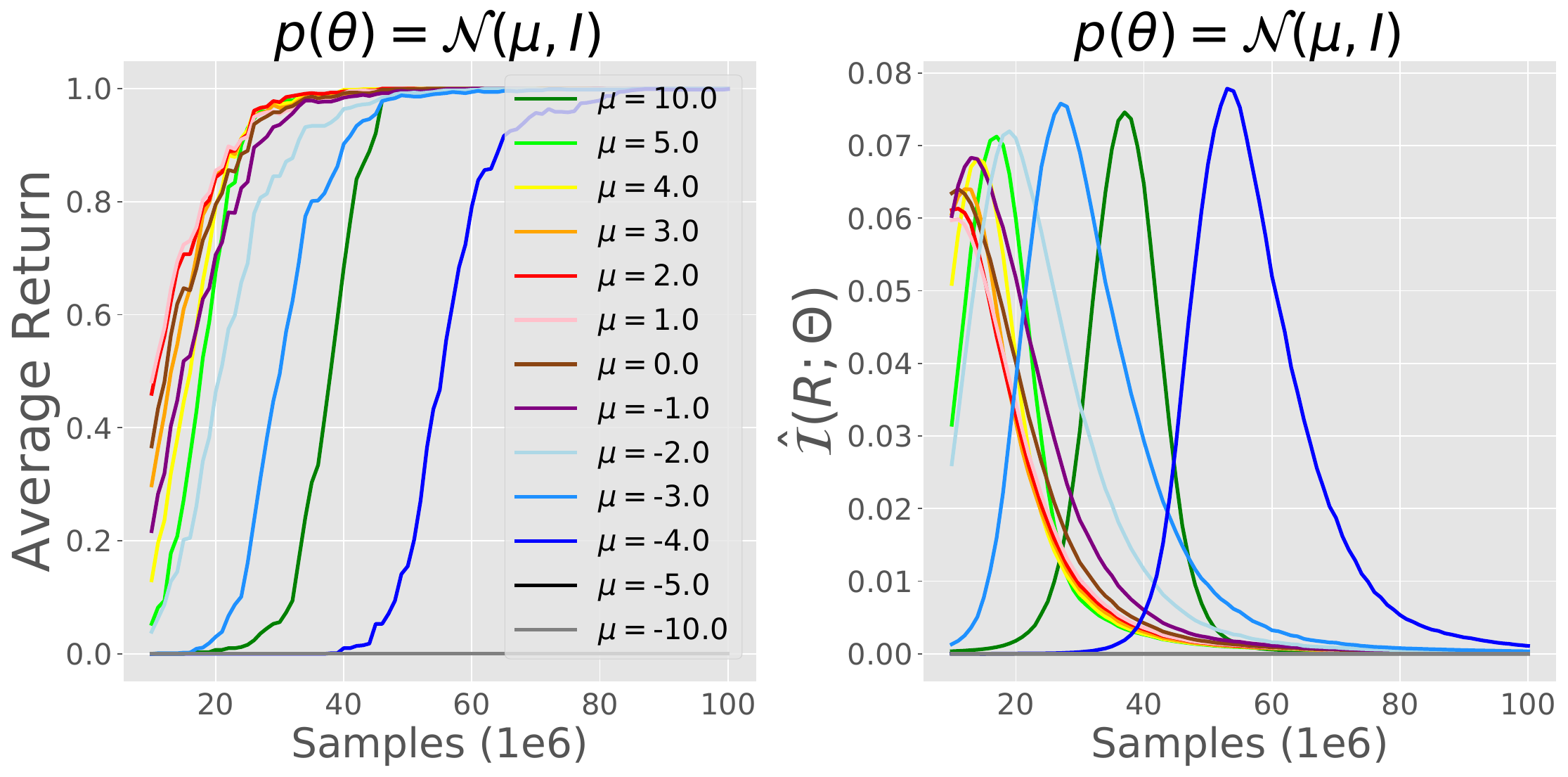}  
\hspace{1.5cm} \small{(a)\quad All results}
\end{center}
\end{minipage}
\begin{minipage}{0.48\hsize}
\begin{center}
\includegraphics[width=\hsize]{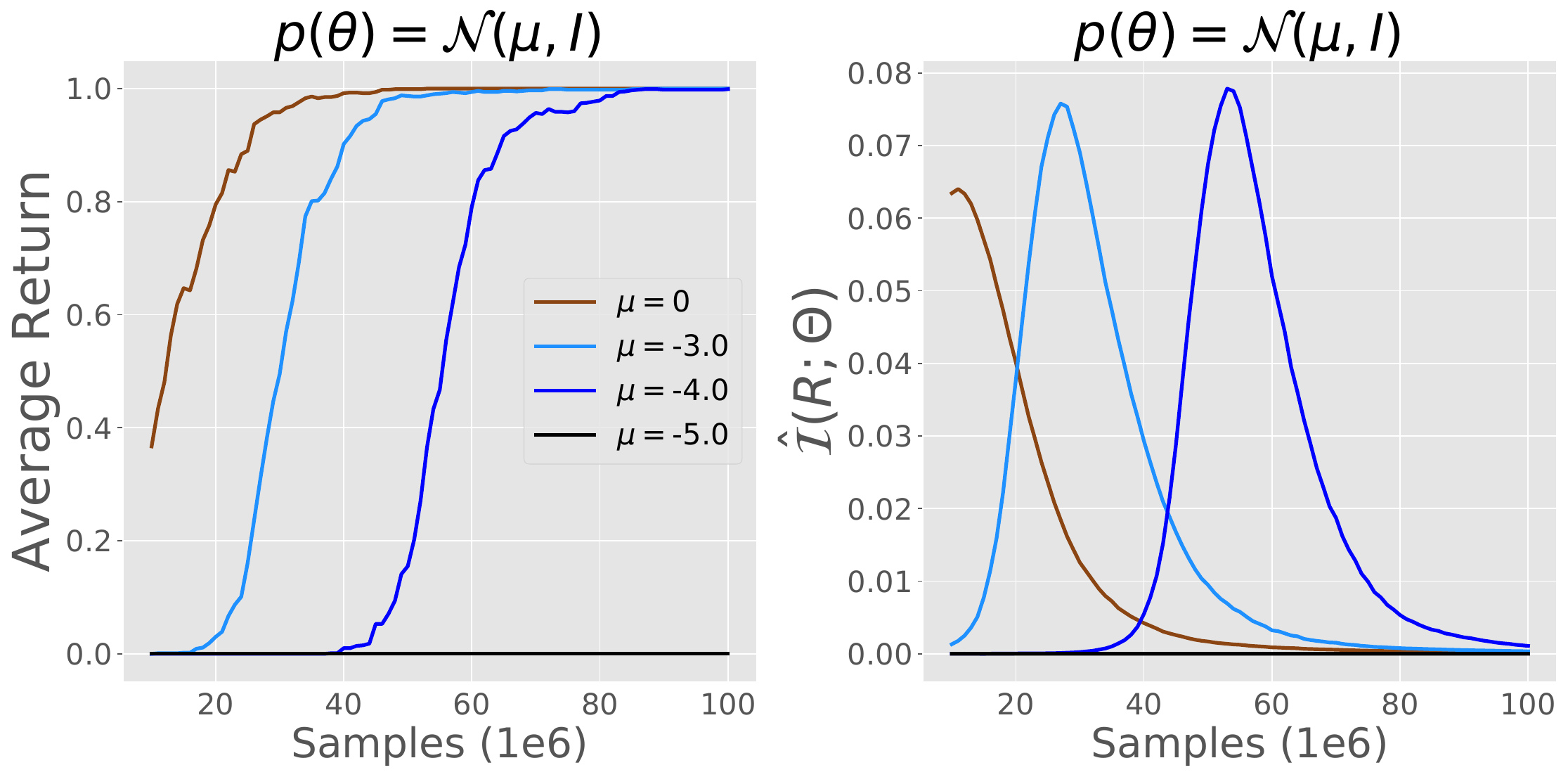}  
\hspace{1.5cm} \small{(b)\quad $\mu=0.0, -3.0, -4.0, -5.0$}
\end{center}
\end{minipage}
\end{tabular}
\vskip -0.05in
\caption{Average return and PIC during the training of evolution strategies. We vary the parameter of initial prior distribution $\mu$ and optimize it. (a) shows the all results, and in (b), we extract a few of them ($\mu=0.0, -3.0, -4.0, -5.0$; we separate these for the visibility). High PIC surely corresponds to regions of faster learning, which is similar trends of POIC.
}
\label{fig:three_step_mdp_ps10_learning_curve}
\end{center}
\end{figure*}

\subsection{Are PIC and POIC more suitable for evaluating task solvability than other alternatives?}

We consider the relations between the normalized score and each reward-based metric ($\mathcal{I}(R;\Theta)$, $\mathcal{H}(R)$, $\mathcal{H}(R|\Theta)$), or optimality-based metric ($\mathcal{I}(\mathcal{O};\Theta)$, $\mathcal{H}(\mathcal{O})$, $\mathcal{H}(\mathcal{O}|\Theta)$) on the MDP $\mathcal{M}$, for a variety of the prior distributions.
We test two variants of Gaussian prior, (a) $\mathcal{N}(\mu, I)$~(changing $\mu \in \{-10, -5, -4, -3, -2, -1, 0, 1, 2, 3, 4, 5, 10\}$), and (b) $\mathcal{N}(0, \sigma^2I)$~(changing $\sigma \in \{0.1, 0.2, ..., 0.9, 1.0\}$).
We set $N=M=1000$, and the horizon is $T=3$ (note that as far as we observed, $N=M=100$ could not estimate POIC metric properly).
\autoref{fig:three_step_mdp} indicates the relation between each metric of reward and normalized score. In (a), we change $\mu$, and in (b) we change $\sigma$.
We see that PIC shows positive correlation to normalized score both in (a) ($R=0.95$) and (b) ($R=0.78$).
The marginal and conditional entropies show, however, positive correlation in (a) (both $R=0.99$), but negative in (b) ($R=-0.83$ and $R=-0.80$). This suggests that there are cases where the marginal or conditional entropy alone might not reflect task solvability appropriately.
Similar to PIC, we see in \autoref{fig:opt_three_step_mdp} that POIC shows positive correlation to normalized score both in (a) ($R=0.94$) and (b) ($R=0.69$).
The marginal and conditional entropies show, however, positive correlation in (a) (both $R=0.99$), but negative in (b) ($R=-0.82$ and $R=-0.72$).

These observations suggest that there are cases where the marginal or conditional entropy alone might not reflect task solvability appropriately.

\begin{figure*}[ht]
\begin{center}
\begin{tabular}{c}
\begin{minipage}{0.5\hsize}
\begin{center}
\includegraphics[width=\hsize]{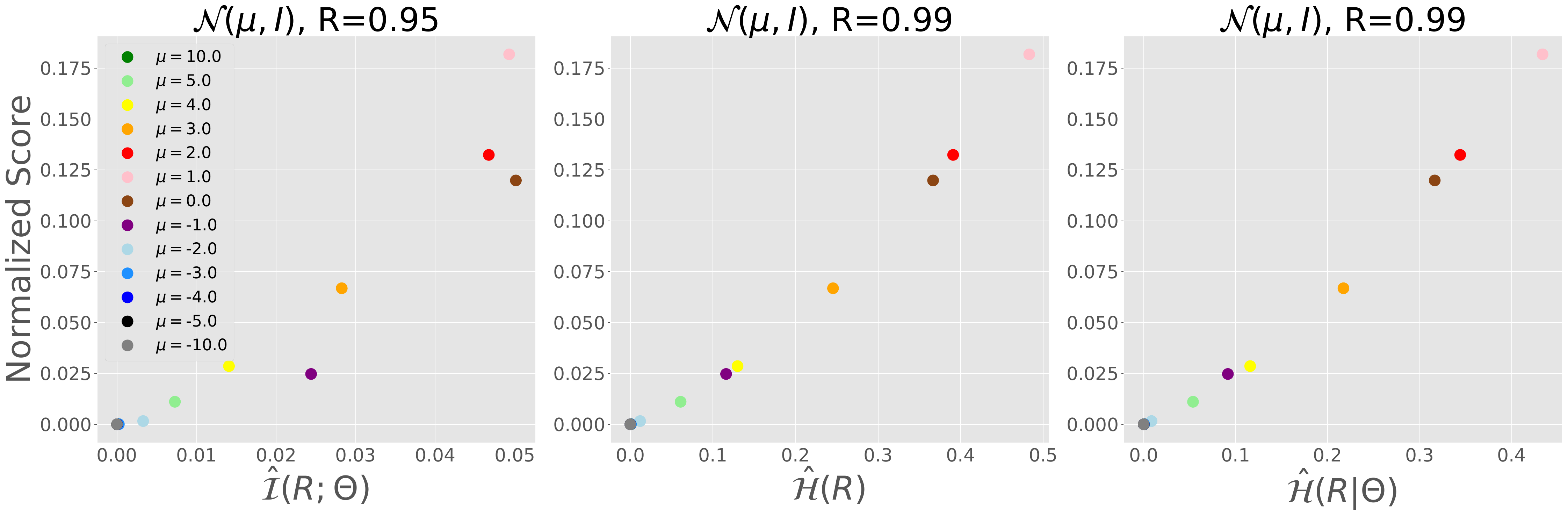}  
\hspace{1.5cm} \small{(a)\quad$p(\theta)=\mathcal{N}(\mu, I)$}
\end{center}
\end{minipage}
\begin{minipage}{0.5\hsize}
\begin{center}
\includegraphics[width=\hsize]{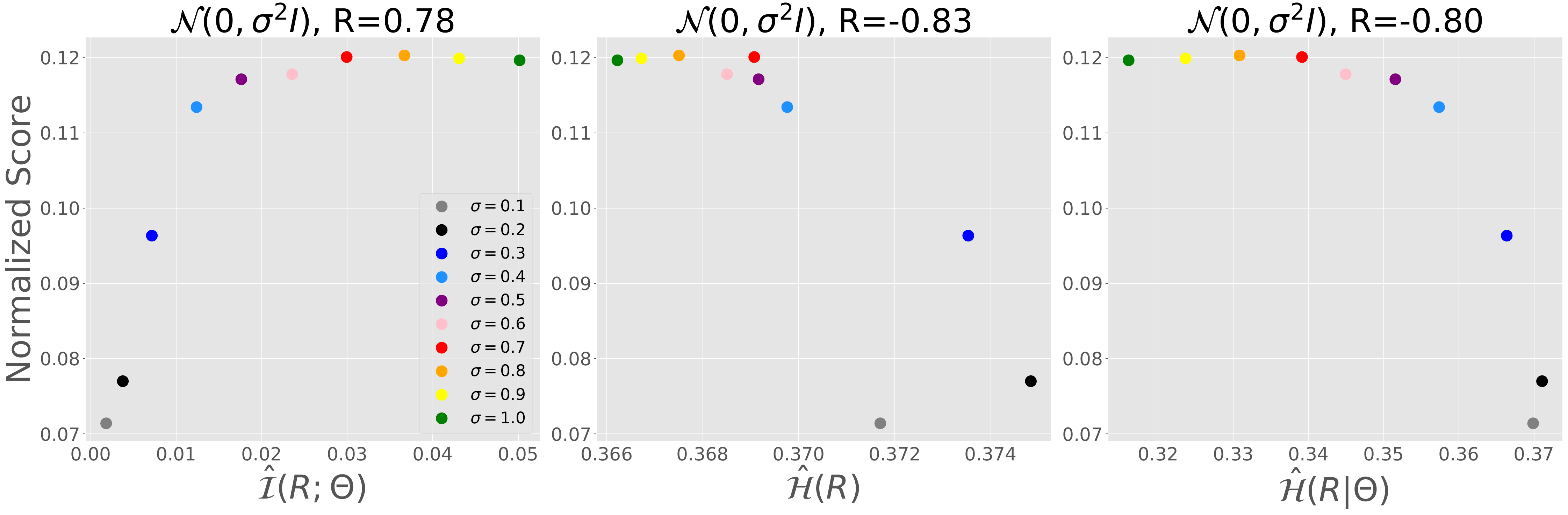}  
\hspace{1.5cm} \small{(b)\quad$p(\theta)=\mathcal{N}(0, \sigma^2I)$}
\end{center}
\end{minipage}
\end{tabular}
\vskip -0.05in
\caption{Relation between each metric (x-axis; PIC (left), marginal entropy (middle), conditional entropy (right)) and normalized score (y-axis). We compute the Pearson correlation coefficient (above the plots). In (a), we change $\mu$, and in (b) we change $\sigma$. PIC is the only metric that shows a consistently positive correlation. The marginal and conditional entropy doesn't have such consistency.}
\label{fig:three_step_mdp}
\end{center}
\end{figure*}

\begin{figure*}[ht]
\begin{center}
\begin{tabular}{c}
\begin{minipage}{0.5\hsize}
\begin{center}
\includegraphics[width=\hsize]{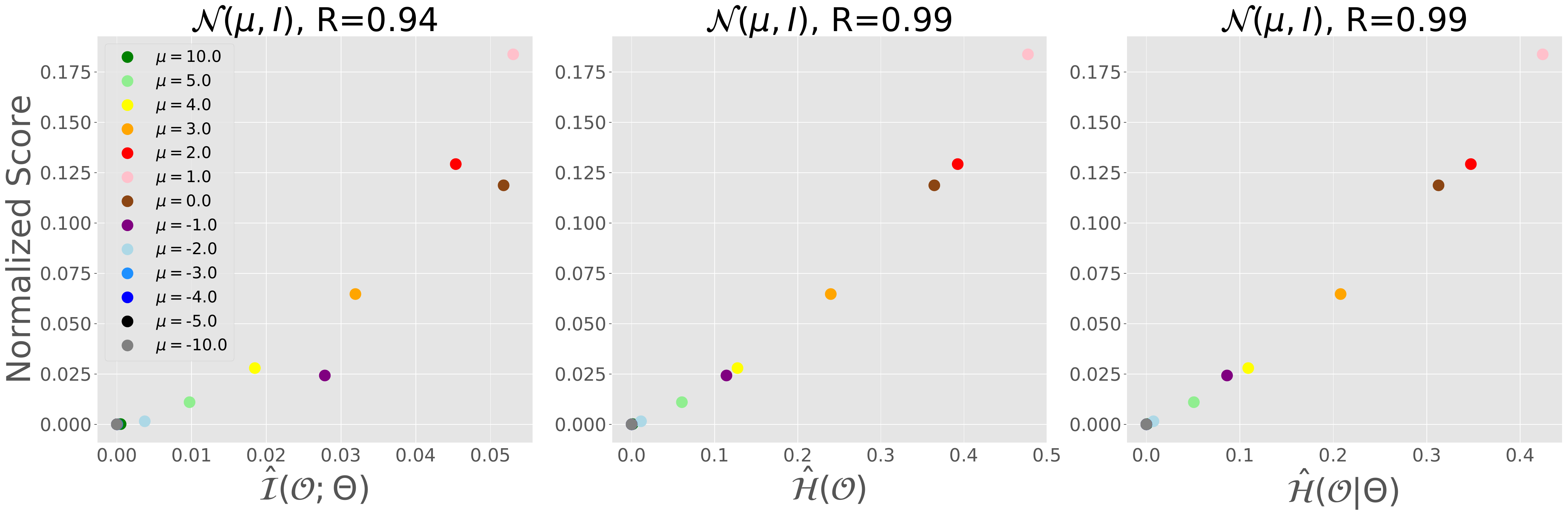}  
\hspace{1.5cm} \small{(a)\quad$p(\theta)=\mathcal{N}(\mu, I)$}
\end{center}
\end{minipage}
\begin{minipage}{0.5\hsize}
\begin{center}
\includegraphics[width=\hsize]{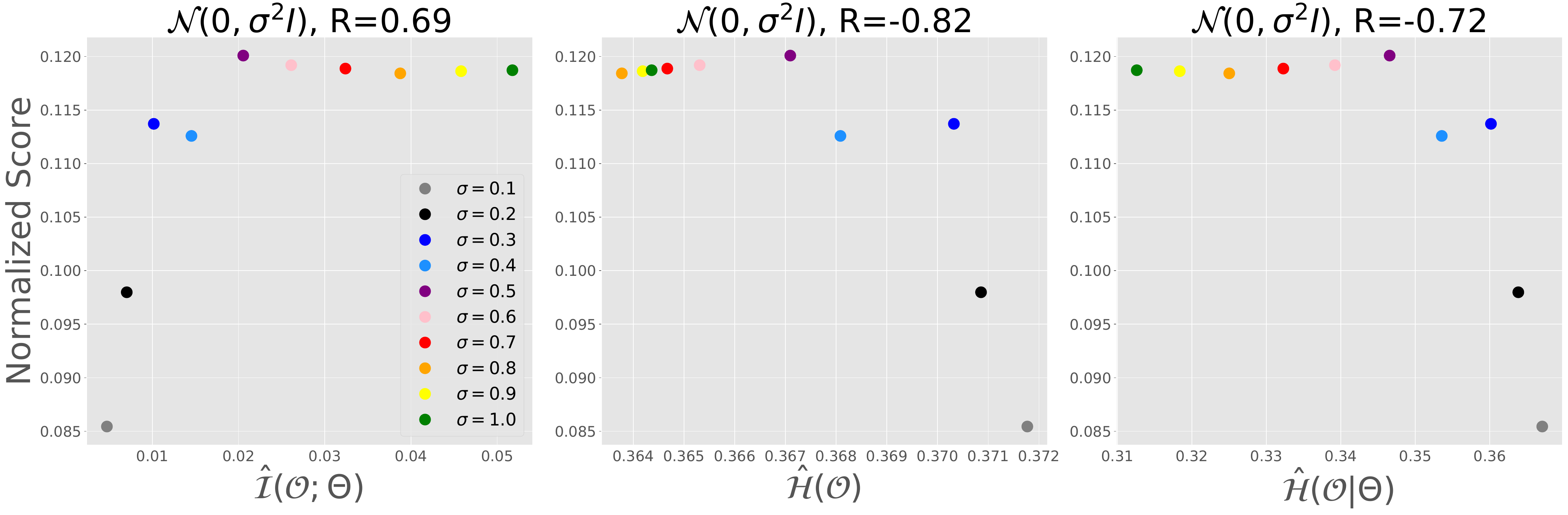}  
\hspace{1.5cm} \small{(b)\quad$p(\theta)=\mathcal{N}(0, \sigma^2I)$}
\end{center}
\end{minipage}
\end{tabular}
\vskip -0.05in
\caption{Relation between each metric (x-axis; POIC (left), marginal entropy (middle), conditional entropy (right)) and normalized score (y-axis). We compute the Pearson correlation coefficient (above the plots). In (a), we change $\mu$, and in (b) we change $\sigma$. Same as the case of PIC~(\autoref{fig:three_step_mdp}), POIC is the only metric that shows a consistently positive correlation. The marginal and conditional entropy doesn't have such consistency.}
\label{fig:opt_three_step_mdp}
\end{center}
\end{figure*}

\clearpage
\section{Normalized Score as Task Complexity Measure}
\label{sec:appendix_normalized_score}
In this section, we explain the details of normalized score. One intuitive and brute-force approach to measure the ground-truth task complexity in complex environments is to compare the performances of just an \textit{average} agent among the environments. When we try to realize this impractical method, we face two difficulties: (1) how can we obtain an \textit{average} agent? (2) how do we manage the different reward scale among the different environments?

To deal with (1), we have two options: substituting average score among diverse RL algorithms or random policy sampling score, for the performance of average agent.
We can also solve (2) by normalizing some average quantity divided by max-min value.
In the following section, we explain the details of each approach.

\subsection{Algorithm-based Normalized Score}
\label{sec:appendix_normalized_score_algo}
To compute the normalized score, we prepare the set of algorithms for evaluation and then execute them all. Since algorithms with different hyper-parameter behave differently, we treat them as different ``algorithms''.
After the evaluation, we compute the average return over the algorithms $r_{\text{ave}}^{\text{algo}}$.
This value, however, completely differs from all environments, due to the range of reward.
To normalize average return for the comparison over the environments, we compute the maximum return $r_{\max}$ and minimum return $r_{\min}$.
In practice, we take the maximum between this algorithm-based and random-sampling-based maximum scores, and use the minimum return obtained by random policy sampling:
\begin{equation}
\text{Normalized Score} = \frac{r_{\text{ave}}^{\text{algo}} - r_{\min}^{\text{rand}}}{\max(r_{\max}^{\text{rand}},~r_{\max}^{\text{algo}}) - r_{\min}^{\text{rand}}} \nonumber.
\end{equation}

We use three types of environments, classic control, MuJoCo, and DeepMind Control Suite~(DM Control).
For MuJoCo and DM Control, we test SAC, MPO and AWR. To simulate the diverse set of algorithms, we employ the leaderboard scores reported in previous SoTA works~\cite{fujimoto2018td3,peng2019awr,laskin_lee2020rad}.

\subsubsection{Classic Control}
For classic control, we run 23 algorithms, based on PPO, DQN and Evolution Strategy for discrete action space environments, and PPO, DDPG, SAC, and Evolution Strategy for continuous action space environments  with different hyper-parameters, such as network architecture or discount factor~(\autoref{table:algorithm_results_classic_control}).
We average each performance with 5 random seeds, and also average over algorithms.

\subsubsection{MuJoCo}
We test SAC, MPO and AWR, following hyper parameters in original implementations.
We average each performance with 10 random seeds and train each agent 1M steps for Hopper, 3M for Ant, HalfCheetah, Walker2d, and 10M for Humanoid (\autoref{table:appendix_mujoco_algo_raw}).
To simulate the diverse set of algorithms, we employ the leaderboard scores reported in previous works~\cite{fujimoto2018td3,peng2019awr} (\autoref{table:appendix_mujoco_algo_awr} and \autoref{table:appendix_mujoco_algo_td3}).
Totally, we use 17 algorithms for Ant, HalfCheetah, Hopper and Walker2d, and 10 algorithms for Humanoid to compute $r_{\text{ave}}^{\text{algo}}$ and $r_{\max}^{\text{algo}}$ (\autoref{table:appendix_mujoco_algo_ave_max}).

\subsubsection{DeepMind Control Suite}
We also test SAC, MPO and AWR, following hyper parameters in original implementations (\autoref{table:appendix_dm_algo_raw}).
We average each performance with 10 random seeds and train each agent 500k steps.
To simulate the diverse set of algorithms, we employ the leaderboard scores reported in previous work~\cite{laskin_lee2020rad} (\autoref{table:appendix_dm_algo_rad}).
Totally, we use 11 algorithms for cheetah run and ball\_in\_cup catch, and 10 algorithms for reacher easy to compute $r_{\text{ave}}^{\text{algo}}$ and $r_{\max}^{\text{algo}}$ (\autoref{table:appendix_dm_algo_ave_max}).

\subsection{Random-Sampling-based Normalized Score}
\label{sec:appendix_normalized_score_rs}
In \citet{oller2020analyzing}, they compute some representatives (e.g. 99.9 percentile score) obtained via random weight guessing and compare them qualitatively among the variety of environments.
We extend this idea to our settings -- quantitative comparison of the task difficulty.

Through the random policy sampling, we can compute the average cumulative reward $r_{\text{ave}}^{\text{rand}}$ and then normalize it using maximum return $r_{\max}$ and minimum return $r_{\min}$. 
In practice, we take the maximum between this algorithm-based and random-search-based maximum scores, and use the minimum return obtained by random policy search:
\begin{equation}
\text{Normalized Score} := \frac{r_{\text{ave}}^{\text{rand}} - r_{\min}^{\text{rand}}}{\max(r_{\max}^{\text{rand}},~r_{\max}^{\text{algo}}) - r_{\min}^{\text{rand}}} \nonumber.
\end{equation}

This method seems easy to use since we do not need extensive evaluations by a variety of RL algorithms. However, this random-sampling-based normalized score is highly biased towards the early stage of learning. It might not reflect the overall nature of environments properly.

\subsection{Correlation to Obvious Properties of MDP}
\label{sec:appendix_obvious_propaties}
In this section, we verify that these brute-force task solvability metrics do not just depend on obvious properties of MDP or policy networks, such as state and action dimensionalities, horizon, and the other type of normalized score.

\autoref{fig:peason_correlation_coefficient_sub} summarizes the correlation among those metrics.
While some properties such as action dimensions or episode length have negative correlations with algorithm-based normalized score, compared to \autoref{fig:peason_correlation_coefficient}, our proposed POIC seems much better than those metrics (see also \autoref{table:empowerment_full} for the details).

\begin{figure*}[ht]
\centering
\includegraphics[width=\linewidth]{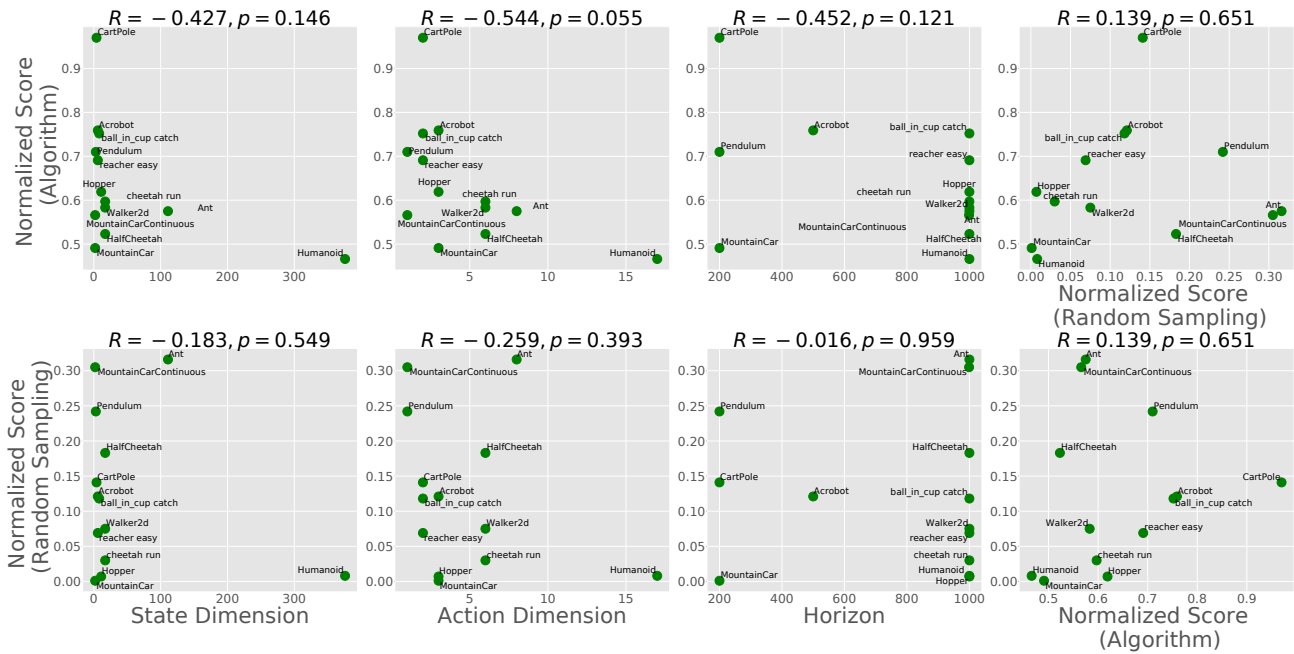}  
\vskip -0.05in
\caption{2D-Scatter plots between each metric (State dimension, action dimension, episode horizon, and normalized scores; x-axis) and each normalized score (algorithm-based (top) and random-sampling-based (bottom); y-axis).}
\label{fig:peason_correlation_coefficient_sub}
\end{figure*}

\begin{table*}[htb]
\begin{center}
\begin{small}
\begin{tabular}{l|l|ccccc}
\toprule
Algorithm & Hyper-Parameters & CartPole & Pendulum & MountainCar & MountainCarContinuous & Acrobot \\
\midrule
PPO & (64, 64), $\gamma=0.995$ & 200.00 & -1045.16 & -97.48 & 65.05 & -66.84 \\
PPO & (64, 64), $\gamma=0.99$ & 200.00 & -582.53 & -117.83 & 18.82 & -67.15 \\
PPO & (64, 64), $\gamma=0.95$ & 200.00 & -170.45 & -158.77 & 9.53 & -67.42 \\
PPO & (128, 64), $\gamma=0.995$ & 200.00 & -1152.09 & -118.02 & 95.89 & -67.42 \\
PPO & (128, 64), $\gamma=0.99$ & 200.00 & -467.42 & -104.73 & 0.00 & -72.83 \\
PPO & (128, 64), $\gamma=0.95$ & 199.75 & -151.43 & -116.44 & 0.00 & -68.09 \\
PPO & (128, 128), $\gamma=0.995$ & 200.00 & -1143.59 & -97.22 & 56.81 & -64.83 \\
PPO & (128, 128), $\gamma=0.99$ & 200.00 & -707.25 & -97.21 & 30.75 & -64.90 \\
PPO & (128, 128), $\gamma=0.95$ & 199.79 & -161.28 & -98.82 & 0.00 & -67.25 \\
\midrule
ES & (16, 16), $\sigma=0.1$ & 200.00 & -1062.54 & -128.82 & 0.00 & -80.30 \\
ES & (16, 16), $\sigma=0.1$, rand & 200.00 & -1059.30 & -127.67 & 0.00 & -79.57  \\
ES & (16, 16), $\sigma=0.3$, rand & 143.19 & -1175.70 & -200.00 & 0.00 & -182.21 \\
ES & (64, 64), $\sigma=0.1$ & 200.00 & -999.26 & -136.48 & 0.00 & -80.97 \\
ES & (64, 64), $\sigma=0.1$, rand & 200.00 & -1012.32 & -131.85 & 0.00 & -80.83 \\
ES & (64, 64), $\sigma=0.3$, rand & 158.43 & -1180.50 & -200.00 & 0.00 & -257.48 \\
\midrule
DQN & (100, 100), $\gamma=0.95$ & 188.80 &-- & -200.00 &-- & -360.57 \\
DQN & (100, 100), $\gamma=0.99$ & 200.00 &-- & -164.76 &-- & -250.53 \\
DQN & (200, 200), $\gamma=0.95$ & 176.69 &-- & -192.70 &-- & -296.48 \\
DQN & (200, 200), $\gamma=0.99$ & 200.00 &-- & -150.86 &-- & -381.99 \\
DQN & (50, 50), $\gamma=0.95$ & 200.00 &-- & -154.91 &-- & -318.57 \\
DQN & (50, 50), $\gamma=0.99$ & 200.00 &-- & -167.02 &-- & -329.15 \\
DQN & (50, 50, 50), $\gamma=0.95$ & 200.00 &-- & -167.91 &-- & -274.28 \\
DQN & (50, 50, 50), $\gamma=0.99$ &200.00 &-- & -167.22 &-- & -169.92 \\
\midrule
SAC & (128, 128), $\gamma=0.99$ & -- & -129.30 & -- & 0.00 & -- \\
SAC & (128, 128), $\gamma=0.95$ & -- & -138.26 & -- & 0.00 & -- \\
SAC & (256, 256), $\gamma=0.99$ & -- & -128.63 & -- & 0.00 & -- \\
SAC & (256, 256), $\gamma=0.95$ & -- & -138.27  & -- & 19.21 & -- \\
\midrule
DDPG & (150, 50), $\gamma=0.95$ & -- & -138.76 & -- & 0.00 & -- \\
DDPG & (150, 50), $\gamma=0.99$ & -- & -130.35 & -- & 0.00 & -- \\
DDPG & (400, 300), $\gamma=0.95$ & -- & -138.61 & -- & 0.00 & -- \\
DDPG & (400, 300), $\gamma=0.99$ & -- & -131.21 & -- & 0.00 & -- \\
\midrule
$r_{\text{ave}}^{\text{algo}}$ & -- & 194.20 & -571.49 & -143.12 & 12.87 & -162.92 \\
\midrule
$r_{\max}^{\text{algo}}$ & -- & 200.00 & -128.63 & -97.21 & 95.89 & -64.83 \\
\bottomrule
\end{tabular}
\end{small}
\end{center}
\vskip -0.15in
\caption{Performance of a variety of algorithms in classic control. The results are averaged over 5 random seeds. We change 2 hyper-parameters, architecture of neural networks and discount factor $\gamma$.}
\label{table:algorithm_results_classic_control}
\end{table*}

\begin{table}[htb]
\begin{center}
\begin{small}
\begin{tabular}{l|cc}
\toprule
Environments & Average & Maximum \\
\midrule
Ant & 2450.8 & 6584.2 \\
HalfCheetah & 6047.2 & 15266.5 \\
Hopper & 2206.7 & 3564.1 \\
Walker2d & 3190.8 & 5813.0 \\
Humanoid & 3880.8 & 8264.0 \\
\bottomrule
\end{tabular}
\end{small}
\end{center}
\vskip -0.15in
\caption{Average and maximum scores in MuJoCo environments, calculated from 17 algorithms (10 for Humanoid) (\autoref{table:appendix_mujoco_algo_raw}, \autoref{table:appendix_mujoco_algo_awr} and \autoref{table:appendix_mujoco_algo_td3}).}
\label{table:appendix_mujoco_algo_ave_max}
\end{table}

\begin{table}[htb]
\begin{center}
\begin{small}
\begin{tabular}{l|ccc}
\toprule
Environments & SAC & MPO & AWR \\
\midrule
Ant & 5526.4 & 6584.2 & 1126.7 \\
HalfCheetah & 15266.5 & 11769.6 & 5742.4 \\
Hopper & 2948.9 & 2135.5 & 3084.7 \\
Walker2d & 5771.8 & 3971.5 & 4716.6 \\
Humanoid & 8264.0 & 5708.7 & 5572.6 \\
\bottomrule
\end{tabular}
\end{small}
\end{center}
\vskip -0.15in
\caption{SAC, MPO and AWR results in MuJoCo environments. These results are averaged across 10 seeds.}
\label{table:appendix_mujoco_algo_raw}
\end{table}

\begin{table*}[htb]
\begin{center}
\begin{small}
\begin{tabular}{l|ccccccc}
\toprule
Environments & TRPO & PPO & DDPG & TD3 & SAC & RWR & AWR \\
\midrule
Ant & 2901 & 1161 & 72 & 4285 & 5909 & 181 & 5067  \\
HalfCheetah & 3302 & 4920 & 10563 & 4309 & 9297 & 1400 & 9136  \\
Hopper & 1880 & 1391 & 855 & 935 & 2769 & 605 & 3405  \\
Walker2d & 2765 & 2617 & 401 & 4212 & 5805 & 406 & 5813  \\
Humanoid & 552 & 695 & 4382 & 81 & 8048 & 509 & 4996  \\
\bottomrule
\end{tabular}
\end{small}
\end{center}
\vskip -0.15in
\caption{Performance of a variety of algorithms in MuJoCo environments, reported by \citet{peng2019awr}. These results are averaged across 5 seeds.}
\label{table:appendix_mujoco_algo_awr}
\end{table*}

\begin{table*}[htb]
\begin{center}
\begin{small}
\begin{tabular}{l|ccccccc}
\toprule
Environments & TD3 & DDPG(1) & DDPG(2) & PPO & TRPO & ACKTR & SAC \\
\midrule
Ant & 4372 & 1005 & 889 & 1083 & -76 & 1822 & 655  \\
HalfCheetah & 9637 & 3306 & 8577 & 1795 & -16 & 1450 & 2347  \\
Hopper & 3564 & 2020 & 1860 & 2165 & 2471 & 2428 & 2997  \\
Walker2d & 4683 & 1844 & 3098 & 3318 & 2321 & 1217 & 1284  \\
\bottomrule
\end{tabular}
\end{small}
\end{center}
\vskip -0.15in
\caption{Performance of a variety of algorithms in MuJoCo environments, reported by \citet{fujimoto2018td3} after 1M steps. These results are averaged across 10 seeds.}
\label{table:appendix_mujoco_algo_td3}
\end{table*}

\begin{table}[htb]
\begin{center}
\begin{small}
\begin{tabular}{l|cc}
\toprule
Environments & Average & Maximum \\
\midrule
cheetah run & 474.4 & 795.0 \\
reacher easy & 691.5 & 961.2 \\
ball\_in\_cup catch & 751.7 & 978.2 \\
\bottomrule
\end{tabular}
\end{small}
\end{center}
\vskip -0.15in
\caption{Average and maximum scores in DM Control environments, calculated from 11 algorithms (10 for reacher easy) (\autoref{table:appendix_dm_algo_raw} and \autoref{table:appendix_dm_algo_rad}).}
\label{table:appendix_dm_algo_ave_max}
\end{table}

\begin{table}[htb]
\begin{center}
\begin{small}
\begin{tabular}{l|ccc}
\toprule
Environments & SAC & MPO & AWR \\
\midrule
cheetah run & 536.0 & 253.9 & 125.2 \\
reacher easy & 961.2 & 841.5 & 530.2 \\
ball\_in\_cup catch & 971.9 & 957.3 & 135.2 \\
\bottomrule
\end{tabular}
\end{small}
\end{center}
\vskip -0.15in
\caption{SAC, MPO and AWR results in DM Control environments after 500k steps. These results are averaged across 10 seeds.}
\label{table:appendix_dm_algo_raw}
\end{table}

\begin{table*}[htb]
\begin{center}
\begin{small}
\begin{tabular}{l|cccccccc}
\toprule
Environments & RAD & CURL & PlaNet & Dreamer & SAC+AE & SLACv1 & Pixel SAC & State SAC \\
\midrule
cheetah run & 728 & 518 & 305 & 570 & 550 & 640 & 197 & 795 \\
reacher easy & 955 & 929 & 210 & 793 & 627 & -- & 145 & 923 \\
ball\_in\_cup catch & 974 & 959 & 460 & 879 & 794 & 852 & 312 & 974 \\
\bottomrule
\end{tabular}
\end{small}
\end{center}
\vskip -0.15in
\caption{Performance of a variety of algorithms in DM Control environments, reported by \citet{laskin_lee2020rad}, after 500k training steps. These results are averaged across 10 seeds.}
\label{table:appendix_dm_algo_rad}
\end{table*}

\clearpage
\section{Policy and Policy-Optimal Information Capacity during ES Training}
\label{sec:appendix_es_empowerments}
We evaluate how PIC and POIC behave during RL training (reward maximization) on complex benchmarking environments, in contrast to synthetic ones in Section \ref{sec:controllability_vs_maximizability}.
We train linear, (4, 4) and (16, 16) neural networks with evolution strategy (ES)~\citep{salimans2017evolution}.
100 parameters are sampled from the trainable prior distribution $p_{\mu}(\theta)=\mathcal{N}(\mu, \sigma=0.1)$ (ES optimizes $\mu$) in each epoch, and the agent runs 100 episodes per parameters to calculate both PIC and POIC.
Section \ref{fig:es_optimality_full} shows the learning curves (top row), corresponding POIC $\mathcal{I}(\mathcal{O};\Theta)$ (middle) and PIC $\mathcal{I}(R;\Theta)$ (bottom).
As we observed in Section \ref{sec:controllability_vs_maximizability}, both information capacity metrics increase during training, and after each performance converges, they gradually decrease.
It might be related to higher correlation of POIC to algorithm-based normalized score shown in Section \ref{sec:Experiments} that POIC seems to follow these trends better than PIC (e.g. Pendulum and HalfCheetah).

Another interesting observation of POIC can be seen in HalfCheetah, where (4, 4) network (green; top) converges to sub-optimal solution and (16, 16) network (blue; top) gets away from there.
The agents that can have multi-modal solutions keeps high POIC (green; middle) after sub-optimal convergence, while POIC decreases as improving performance (blue; middle).
This might suggests that a sub-optimal prior distribution $p_{\mu}(\theta)=\mathcal{N}(\mu, \sigma=0.1)$ still can be easy to minimize the rewards (green), though the further improvements (blue) make it lean towards maximization (i.e. less controllable).
Measuring PIC and POIC with more familiar on-policy algorithms such as PPO or TRPO remains as future work.

\begin{figure}[htb]
\centering
\includegraphics[width=\linewidth]{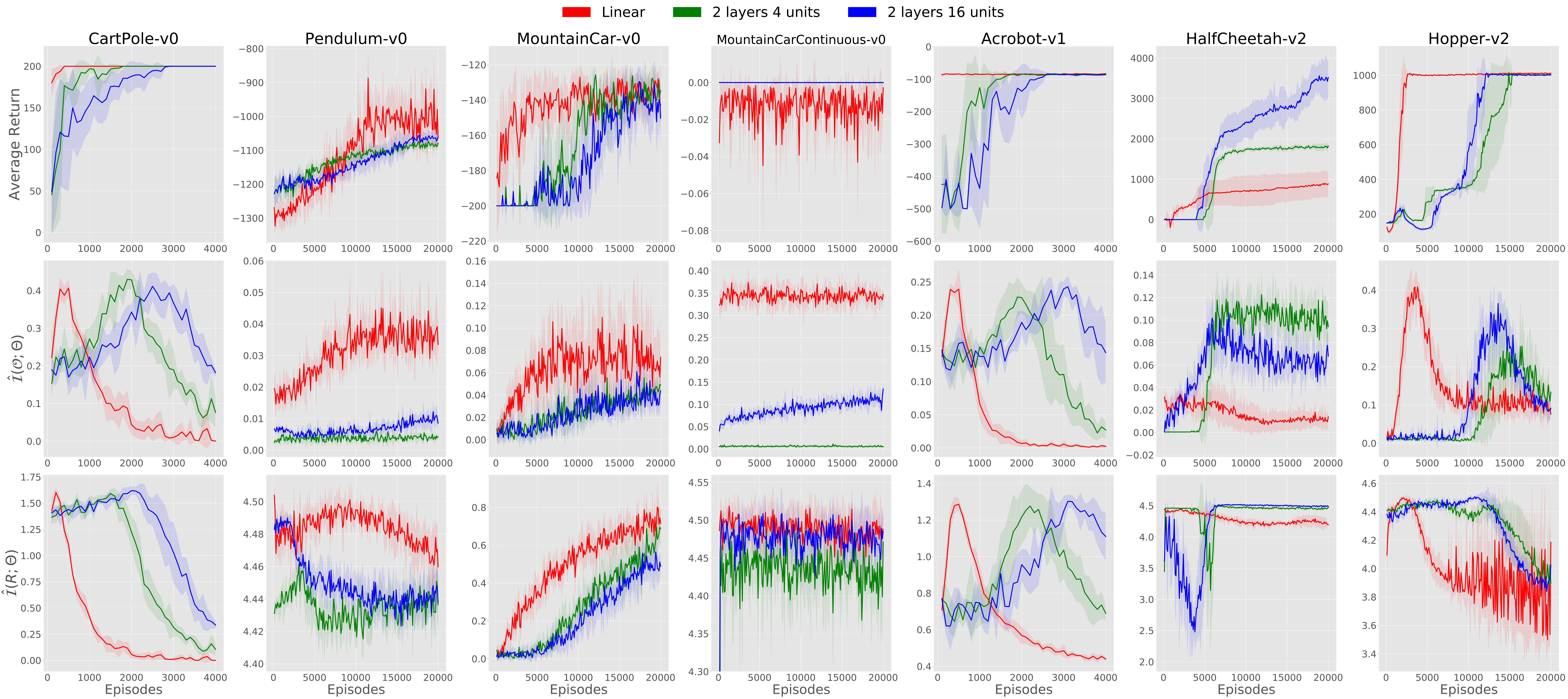}  
\vskip -0.05in
\caption{The average returns (top), POIC (middle) and PIC (bottom) during ES training. We use several classic control and MuJoCo tasks. Both information capacity metrics increase during training, and after each performance converges, they gradually decrease.
It might be related to higher correlation of POIC to algorithm-based normalized score shown in Section \ref{sec:Experiments} that POIC seems to follow these trends better than PIC (e.g. Pendulum and HalfCheetah).}
\label{fig:es_optimality_full}
\end{figure}

\clearpage
\section{A Proof of Proposition~\ref{prop:better policy determination}}
\label{sec:appendix_proof_of_reward_empowerment_and_policy_selection}
Assume that $\theta_1$ is better than $\theta_2$ without loss of generality.

The proof relies on the following Chernoff's bound tailored for a normal distribution \citep{boucheron2013concentration}: for $N$ independent samples $(X_n)_{n=1,\ldots,N}$ from a normal distribution $\mathcal{N}(\mu, \sigma^2)$, and $\mu$'s $N$-sample-estimate $\hat{\mu} := \sum_{n=1}^N X_n / N$,
\begin{align*}
    \Pr \left\{ \hat{\mu} \leq \mu - \varepsilon \right\} \leq \exp \left( - \frac{N \varepsilon^2}{2 \sigma^2} \right)
\end{align*}
holds. Applying this bound to our case with $\hat{\mu} = \hat{\mu}_1 - \hat{\mu}_2$, $\mu = \mu_{\theta_1} - \mu_{\theta_2}$, and $\varepsilon = \mu$, we obtain
\begin{align*}
\Pr \left\{ \hat{\mu}_1 - \hat{\mu}_2 \leq 0 \middle| \theta_1, \theta_2 \right\} \leq \exp \left( - \frac{N}{2} \left(\frac{\mu_{\theta_1} - \mu_{\theta_2}}{\sigma_{\theta_1} + \sigma_{\theta_2}} \right)^2 \right).
\end{align*}
Since the entropy of $\mathcal{N}(\mu_\theta, \sigma_\theta^2)$ is $\log (\sigma_{\theta} \sqrt{2 \pi e} )$, the upper bound can be rewritten as
\begin{align*}
\Pr \left\{ \hat{\mu}_1 - \hat{\mu}_2 \leq 0 \middle| \theta_1, \theta_2 \right\} \leq \exp \left( - \pi e N \left(\frac{\mu_{\theta_1} - \mu_{\theta_2}}{\exp (\mathcal{H}_1) + \exp( \mathcal{H}_2) } \right)^2 \right).
\end{align*}
Taking the expectation of both sides with respect to $\theta_1$ and $\theta_2$, the proof is concluded.

\section{Full Results on Deep RL Experiment}
\label{sec:appendix_full_results}
In this section, we provide the the full results on the experiment in Section~\ref{sec:mi_on_benchmark}.

We used max 40 CPUs for the experiments in Section \ref{sec:mi_on_benchmark} and it took about at most 2 hours per each random sampling (e.g. HalfCheetah-v2).
For estimating brute-force normalized score (Section~\ref{sec:norm_task_scores} and \autoref{sec:appendix_normalized_score}), we mainly used 4 GPUs (NVIDIA V100; 16GB) and it took about 4 hours per seed.


\paragraph{Correlation of Policy-Optimal Information Capacity}
As seen in \autoref{fig:peason_correlation_coefficient}, the relation between POIC and the algorithm-based normalized score might seem a bit concentrated or skewed with some outliers\footnote{We tried an early stopping when finding the $\argmax$ temperature with a black-box optimizer, but it didn't ease these concentrations.}.
To check the validity of correlation, we remove these outliers and recompute the correlation.
\autoref{fig:optimality_empowerment_rm_outlier} exhibits the strong positive correlation still holds ($R=0.780$, statistically significant with $p<0.01$.) after we remove top-3 outliers of POIC (CartPole, Acrobot, and MountainCarContinuous).

\begin{figure}[htb]
\centering
\includegraphics[width=0.4\linewidth]{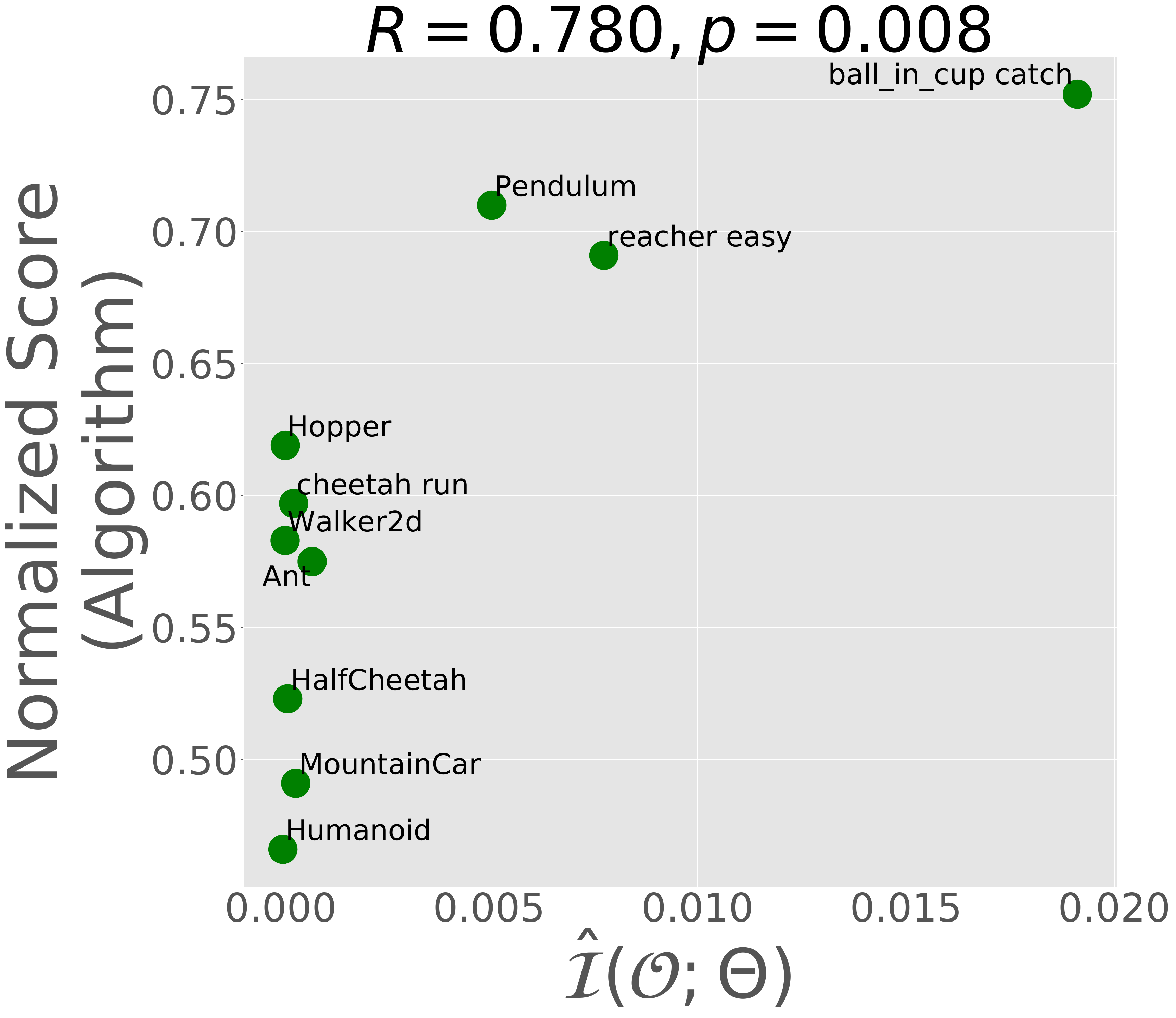}  
\vskip -0.05in
\caption{
The correlation between POIC and the algorithm-based normalized score.
The strong positive correlation still holds ($R=0.780$, statistically significant with $p<0.01$.) after we remove top-3 outliers of POIC (CartPole, Acrobot, and MountainCarContinuous).
}
\label{fig:optimality_empowerment_rm_outlier}
\end{figure}

\begin{landscape}
\begin{table}[p]
\setlength{\tabcolsep}{3pt}
\begin{center}
\begin{small}
\begin{tabular}{l|c|c|cccccccccc}
\toprule
Environment & Score(A) & Score(R) & $\hat{\mathcal{I}}(\mathcal{O};\Theta)$ & $\hat{\mathcal{H}}(\mathcal{O})$ & $\hat{\mathcal{H}}(\mathcal{O}|\Theta)$ & $\hat{\mathcal{I}}(R;\Theta)$ & $\hat{\mathcal{H}}(R)$ & $\hat{\mathcal{H}}(R|\Theta)$ & Variance & State dim & Action dim & Horizon \\
\midrule
CartPole & 0.970 & 0.141 & 0.153418 & 0.210 & 0.056 & 1.227 & 3.262 & 2.035 & 12.610 & 4 & 2 & 200 \\
Pendulum & 0.710 & 0.242 & 0.005060 & 0.374 & 0.369 & 3.708 & 10.520 & 6.812 & 23.223 & 3 & 1 & 200 \\
MountainCar & 0.491 & 0.001 & 0.000357 & 0.073 & 0.073 & 0.019 & 0.027 & 0.008 & 0.054 & 2 & 3 & 200 \\
MountainCarContinuous & 0.566 & 0.305 & 0.030092 & 0.424 & 0.394 & 5.953 & 10.095 & 4.142 & 8.022 & 2 & 1 & 999 \\
Acrobot & 0.759 & 0.121 & 0.106989 & 0.285 & 0.178 & 0.551 & 1.355 & 0.803 & 35.891 & 6 & 3 & 500 \\
Ant & 0.575 & 0.316 & 0.000751 & 0.501 & 0.500 & 1.767 & 8.010 & 6.243 & 37.830 & 111 & 8 & 1000 \\
HalfCheetah & 0.523 & 0.183 & 0.000165 & 0.503 & 0.503 & 2.488 & 8.764 & 6.276 & 23.468 & 17 & 6 & 1000 \\
Hopper & 0.619 & 0.007 & 0.000107 & 0.453 & 0.453 & 3.572 & 7.088 & 3.516 & 3.006 & 11 & 3 & 1000 \\
Walker2d & 0.583 & 0.075 & 0.000102 & 0.227 & 0.227 & 2.211 & 6.350 & 4.139 & 0.403 & 17 & 6 & 1000 \\
Humanoid & 0.466 & 0.008 & 5e-05 & 0.275 & 0.275 & 2.792 & 7.359 & 4.567 & 1.917 & 376 & 17 & 1000 \\
cheetah run & 0.597 & 0.030 & 0.000308 & 0.487 & 0.487 & 2.635 & 8.743 & 6.108 & 1.972 & 17 & 6 & 1000 \\
reacher easy & 0.691 & 0.069 & 0.007751 & 0.036 & 0.029 & 0.594 & 3.863 & 3.268 & 20.669 & 6 & 2 & 1000 \\
ball\_in\_cup catch & 0.752 & 0.118 & 0.019111 & 0.303 & 0.284 & 0.367 & 1.391 & 1.024 & 91.454 & 8 & 2 & 1000 \\
\midrule
Correlation Coefficient: Score(A) & -- & 0.139 & 0.807 & -0.212 & -0.418 & -0.295 & -0.349 & -0.327 & 0.372 & -0.427 & -0.544 & -0.452 \\
p-Value: Score(A) & -- & 0.651 & 0.001 & 0.487 & 0.156 & 0.328 & 0.242 & 0.275 & 0.211 & 0.146 & 0.055 & 0.121 \\
Correlation Coefficient: Score(R) & 0.139 & -- & 0.121 & 0.457 & 0.378 & 0.401 & 0.455 & 0.414 & 0.314 & -0.183 & -0.259 & -0.016 \\
p-Value: Score(R) & 0.651 & -- & 0.693 & 0.116 & 0.203 & 0.175 & 0.118 & 0.160 & 0.297 & 0.549 & 0.393 & 0.959 \\
\bottomrule
\end{tabular}
\end{small}
\end{center}
\vskip -0.15in
\caption{The raw results of POIC $\hat{\mathcal{I}}(\mathcal{O};\Theta)$, optimality marginal entropy $\hat{\mathcal{H}}(\mathcal{O})$, optimality conditional entropy $\hat{\mathcal{H}}(\mathcal{O}|\Theta)$, PIC $\hat{\mathcal{I}}(R;\Theta)$, reward marginal entropy $\hat{\mathcal{H}}(R)$, reward conditional entropy $\hat{\mathcal{H}}(R|\Theta)$, normalized variance of return in standard RL benchmark environments, Pearson correlation coefficient to the algorithm-based normalized score~(Score(A)), and the random-sampling-based normalized score~(Score(R)). We prepare bags of policy architectures, totally 56 variants of architectures; ([0] layers $+$ [1, 2] layers $\times$ [4, 32, 64] units) $\times$ [Gaussian prior $\mathcal{N}(0, I)$, Uniform prior $\textit{Unif}(-1,1)$, Xavier Normal, Xavier Uniform] $\times$ [w/ bias, w/o bias]. The results suggest that POIC seems to positively correlate well with algorithm-based normalized score better than any other alternatives, such as marginal reward entropy or variance of returns~\citep{oller2020analyzing}.}
\label{table:empowerment_full}
\end{table}
\end{landscape}

\section{Details of Reward Shaping Experiments}
\label{sec:effect_on_reward_shaping}
Here, we present hyper-parameters of the reward functions and raw experimental results (\autoref{table:reward_hyper_reacher} for Reacher and \autoref{table:reward_hyper_pointmaze} for Pointmaze~(\autoref{fig:pointmaze})).
In the experiment, we employ the following four families of goal-oriented reward function:
\begin{enumerate}
\setlength{\parskip}{0cm}
\setlength{\leftskip}{1.0cm}
\item L1 norm: $r(s, s_g) = -\alpha\|s-s_g\|_{1}$,
\item L2 norm: $r(s, s_g) =-\alpha\|s-s_g\|_{2}$,
\item Fraction: $r(s, s_g) =\frac{\beta}{\gamma + \|s-s_g\|_{2}}$,
\item Sparse: $r(s, s_g) = - \mathbbm{1}[\|s-s_g\|_{2} \geq \epsilon]$.
\end{enumerate}
This notations of hyper-parameter correspond to \autoref{table:reward_hyper_reacher} and \autoref{table:reward_hyper_pointmaze}.
To estimate information capacity metrics, we use the small neural networks (2 layers, 4 hidden units, Gaussian initialization $\mathcal{N}(0, I)$, and without bias term) for the simplicity.
For the following RL training, we employ the same policy architectures, while keeping original value networks sizes (2 layers, 64 hidden units).
We normalize the scores of PPO, trained 500 steps and averaged among 5 seeds for fair comparisons among different reward-scale environments.

\begin{table}[htb]
\begin{center}
\begin{small}
\scalebox{0.72}{
\begin{tabular}{l|c|ccc|cc}
\toprule
Reward & Hyper-parameter & $r_{\max}^{\text{rand}}$ & PPO (500k) & Normalized Score & $\hat{\mathcal{I}}(R;\Theta)$ & $\hat{\mathcal{I}}(\mathcal{O};\Theta)$ \\
\midrule
L1-norm & $\alpha=1.0$ & -0.557 & -8.776 & 0.507 & 0.796 & 5.801e-3 \\
L1-norm & $\alpha=0.5$ & -0.316 & -4.245 & 0.524 & 0.798 & 5.785e-3 \\
L1-norm & $\alpha=2.0$ & -1.335 & -17.009 & 0.512 & 0.791 & 5.943e-3 \\
L1-norm & $\alpha=5.0$ & -2.940 & -42.702 & 0.524 & 0.794 & 5.965e-3 \\
\midrule
L2-norm & $\alpha=1.0$ & -0.417 & -6.604 & 0.530 & 0.837 & 5.743e-3 \\
L2-norm & $\alpha=0.5$ & -0.241 & -3.238 & 0.558 & 0.841 & 5.841e-3 \\
L2-norm & $\alpha=2.0$ & -0.969 & -13.339 & 0.522 & 0.837 & 5.815e-3 \\
L2-norm & $\alpha=5.0$ & -2.524 & -34.760 & 0.502 & 0.837 & 5.778e-3 \\
\midrule
Fraction & $(\beta, \gamma)=(0.01, 0.01)$ & 31.182 & 5.874 & 0.133 & 0.522 & 9.279e-5 \\
Fraction & $(\beta, \gamma)=(0.1, 0.1)$ & 45.758 & 23.787 & 0.306 & 0.772 & 1.270e-3 \\
Fraction & $(\beta, \gamma)=(0.01, 0.1)$ & 4.546 & 2.326 & 0.281 & 0.779 & 1.318e-3 \\
Fraction & $(\beta, \gamma)=(0.05, 0.1)$ & 22.839 & 11.673 & 0.278 & 0.772 & 1.284e-3 \\
\midrule
Sparse & $\epsilon=0.05$ & 0.000 & -44.220 & 0.111 & 0.297 & 1.348e-3 \\
Sparse & $\epsilon=0.01$ & 0.000 & -49.660 & 0.009 & 0.024 & 9.171e-6 \\
Sparse & $\epsilon=0.1$ & -10.000 & -30.520 & 0.376 & 0.541 & 0.0103 \\
Sparse & $\epsilon=0.15$ & 0.000 & -18.460 & 0.586 & 0.561 & 0.0329 \\
\bottomrule
\end{tabular}
}
\end{small}
\end{center}
\vskip -0.15in
\caption{Hyper-parameter and results of reward shaping experiments in Reacher, appeared in Section~\ref{sec:exp_reward_shaping}.}
\label{table:reward_hyper_reacher}
\end{table}

\begin{table}[htb]
\begin{center}
\begin{small}
\scalebox{0.72}{
\begin{tabular}{l|c|ccc|cc}
\toprule
Reward & Hyper-parameter & $r_{\max}^{\text{rand}}$ & PPO (500k) & Normalized Score & $\hat{\mathcal{I}}(R;\Theta)$ & $\hat{\mathcal{I}}(\mathcal{O};\Theta)$ \\
\midrule
L1-norm & $\alpha=1.0$ & -4.095 & -177.385 & 0.706 & 1.740 & 0.0232 \\
L1-norm & $\alpha=0.5$ & -1.556 & -100.371 & 0.664 & 1.806 & 0.0226  \\
L1-norm & $\alpha=2.0$ & -7.418 & -388.604 & 0.677 & 1.800 & 0.0226 \\
L1-norm & $\alpha=5.0$ & -13.431 & -1009.456 & 0.661 & 1.802 & 0.0227 \\
\midrule
L2-norm & $\alpha=1.0$ & -3.314 & -154.742 & 0.641 & 1.845 & 0.0225 \\
L2-norm & $\alpha=0.5$ & -1.983 & -81.826 & 0.621 & 1.842 & 0.0225\\
L2-norm & $\alpha=2.0$ & -5.971 & -293.556 & 0.660 & 1.836 & 0.0223 \\
L2-norm & $\alpha=5.0$ & -11.693 & -811.083 & 0.622 & 1.839 & 0.0216 \\
\midrule
Fraction & $(\beta, \gamma)=(0.01, 0.01)$ & 59.037 & 6.253 & 0.095 & 1.051 & 1.962e-5 \\
Fraction & $(\beta, \gamma)=(0.1, 0.1)$ & 123.128 & 37.607 & 0.270 & 1.443 & 2.888e-4 \\
Fraction & $(\beta, \gamma)=(0.01, 0.1)$ & 13.017 & 3.546 & 0.238 & 1.443 & 2.570e-4 \\
Fraction & $(\beta, \gamma)=(0.05, 0.1)$ & 67.266 & 18.154 & 0.236 & 1.424 & 2.357e-4 \\
\midrule
Sparse & $\epsilon=0.5$ & 0.000 & -110.000 & 0.258 & 0.258 & 0.0186\\
Sparse & $\epsilon=0.1$ & 0.000 & -146.740 & 0.019 & 0.019 & 4.339e-4 \\
Sparse & $\epsilon=0.2$ & 0.000 & -137.800 & 0.076 & 0.076 & 1.574e-3\\
Sparse & $\epsilon=1.0$ & 0.000 & -40.240 & 0.726 & 0.726 & 0.0768 \\
\bottomrule
\end{tabular}
}
\end{small}
\end{center}
\vskip -0.15in
\caption{Hyper-parameter and results of reward shaping experiments in Pointmaze, appeared in Section~\ref{sec:exp_reward_shaping}.}
\label{table:reward_hyper_pointmaze}
\end{table}

\vspace{-\baselineskip}
\begin{figure}[ht]
\centering
\includegraphics[width=0.18\linewidth]{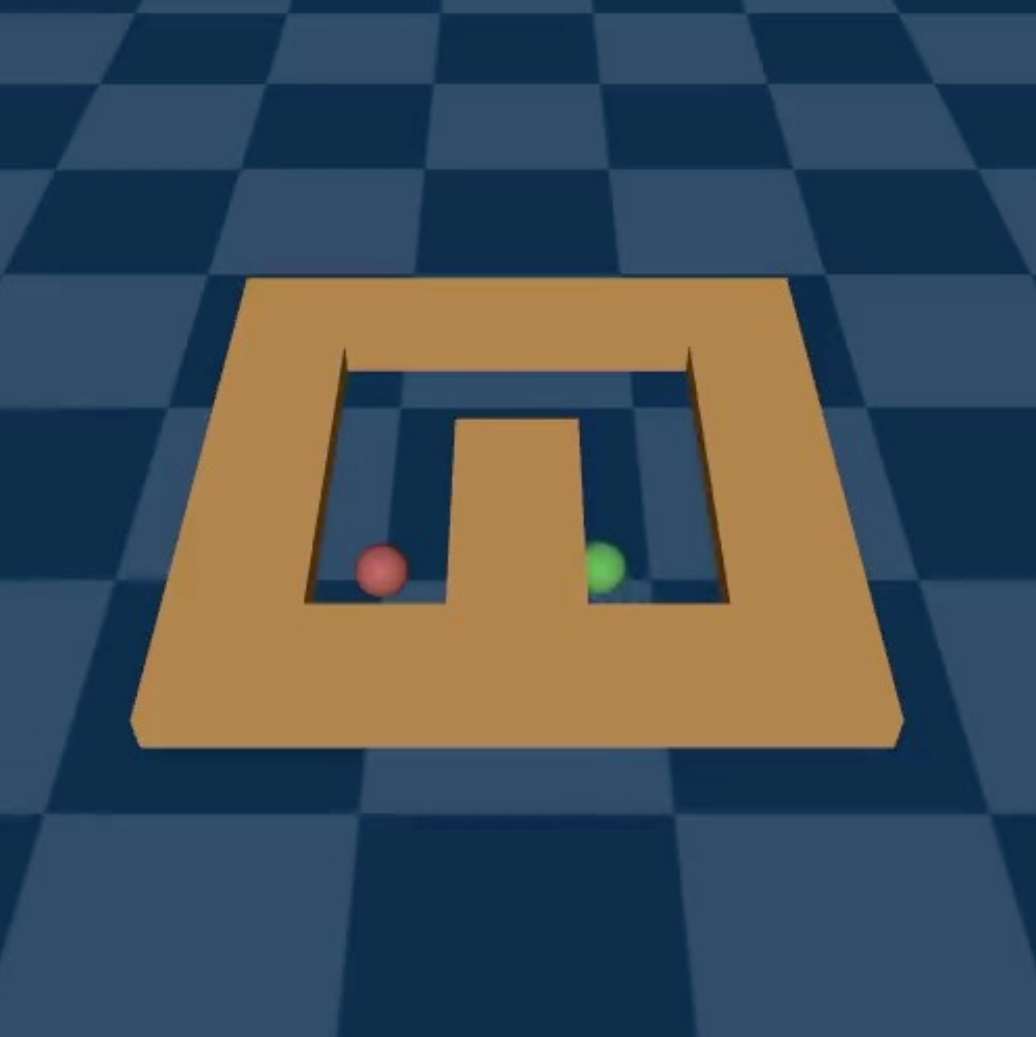}  
\vskip -0.05in
\caption{Pointmaze environment~\citep{fu2020d4rl}.}
\label{fig:pointmaze}
\end{figure}

\clearpage
\section{Evaluating the Goodness of Network Architecture and Initialization}
\label{sec:mi_with_different_archi}
Can we also use PIC and POIC to evaluate the goodness of network architecture or initialization? We investigate the correlation between PIC or POIC and the normalized score of the PPO policy with different network configurations. For the comparison, we prepare 7 policy network architectures without bias term (0 layers $+$ [1, 2] layers $\times$ [4, 32, 64] hidden units; while keeping original value networks sizes, 2 layers 64 units), and 4 initializations (normal $\mathcal{N}(0, I)$, uniform $\textit{Unif}(-1,1)$, Xavier normal and Xavier uniform). Xavier Normal and Uniform are the typical initialization methods of neural networks~\citep{glorot2010}.
First, we measure both PIC and POIC for each policy, and then train it with PPO, during 500k steps, except for 50k steps in CartPole~(see \autoref{sec:mi_with_different_archi_details} for the detailed results).

The results are shown in~\autoref{table:architecture_initialization_optimality}, and \autoref{table:architecture_initialization_reward}.
We can see valid positive correlations in CartPole, Pendulum, HalfCheetah, Hopper, and Walker2d with specific initialization, which are statistically significant with $p<0.05$, while we also observe the weak positive, negative, or no trends in other environments.
In several domains, PIC and POIC might be used for architectural tuning without extensive RL trainings.
In addition, some negative results seem consistent with empirical observations in recent RL research; the performance of many deep RL algorithms require architectural tuning for best performances~\citep{Schulman:2015uk, Schulman2017PPO, engstrom2019implementation, andrychowicz2021what}, and can be sensitive to architecture and initialization~\citep{Rajeswaran2017}.

\begin{table*}[htb]
\begin{center}
\begin{scriptsize}
\begin{tabular}{c|cccccccc}
\toprule
PIC & CartPole & Pendulum & MountainCar & MountainCarContinuous & Acrobot & HalfCheetah & Hopper & Walker2d \\
\midrule
\begin{tabular}{c} Normal \\ (p-Value) \end{tabular} & \begin{tabular}{c} \textbf{0.938} \\ (0.002) \end{tabular} &  \begin{tabular}{c} 0.121 \\ (0.796) \end{tabular} & \begin{tabular}{c} 0.681 \\ (0.092) \end{tabular} & \begin{tabular}{c} 0.124 \\ (0.791) \end{tabular} & \begin{tabular}{c} -0.455 \\ (0.304) \end{tabular} & \begin{tabular}{c} \textbf{0.891} \\ (0.007) \end{tabular} & \begin{tabular}{c} \textbf{0.839} \\ (0.018) \end{tabular} & \begin{tabular}{c} \textbf{0.782} \\ (0.038) \end{tabular} \\
\midrule
\begin{tabular}{c} Uniform \\ (p-Value) \end{tabular} & \begin{tabular}{c} 0.658 \\ (0.108) \end{tabular} &  \begin{tabular}{c} -0.172 \\ (0.712) \end{tabular} & \begin{tabular}{c} -0.783 \\ (0.037) \end{tabular} & \begin{tabular}{c} -0.021 \\ (0.965) \end{tabular} & \begin{tabular}{c} 0.502 \\ (0.251) \end{tabular} & \begin{tabular}{c} \textbf{0.881} \\ (0.009) \end{tabular} & \begin{tabular}{c} 0.649 \\ (0.115) \end{tabular} & \begin{tabular}{c} 0.617 \\ (0.140) \end{tabular} \\
\midrule
\begin{tabular}{c} Xavier (N) \\ (p-Value) \end{tabular} & \begin{tabular}{c} 0.483 \\ (0.272) \end{tabular} &  \begin{tabular}{c} 0.438 \\ (0.325) \end{tabular} & \begin{tabular}{c} -0.544 \\ (0.207) \end{tabular} & \begin{tabular}{c} -0.079 \\ (0.866) \end{tabular} & \begin{tabular}{c} -0.474 \\ (0.283) \end{tabular} & \begin{tabular}{c} 0.186 \\ (0.689) \end{tabular} & \begin{tabular}{c} -0.660 \\ (0.106) \end{tabular} & \begin{tabular}{c} -0.789 \\ (0.035) \end{tabular} \\
\midrule
\begin{tabular}{c} Xavier (U) \\ (p-Value) \end{tabular} & \begin{tabular}{c} 0.171 \\ (0.713) \end{tabular} &  \begin{tabular}{c}  0.406 \\ (0.366) \end{tabular} & \begin{tabular}{c} -0.244 \\ (0.599) \end{tabular} & \begin{tabular}{c} 0.667 \\ (0.102) \end{tabular} & \begin{tabular}{c} -0.259 \\ (0.574) \end{tabular} & \begin{tabular}{c} -0.003 \\ (0.995) \end{tabular} & \begin{tabular}{c} -0.709 \\ (0.075) \end{tabular} & \begin{tabular}{c} -0.965 \\ ($>$ 0.001) \end{tabular} \\
\bottomrule
\end{tabular}
\end{scriptsize}
\end{center}
\vskip -0.15in
\caption{Pearson correlation coefficient between the normalized score of different policies (network architecture and initialization) and PIC. 
We can see valid positive correlations in CartPole, HalfCheetah, Hopper, and Walker2d with specific initialization, which are statistically significant with $p<0.05$. The 2D-plots can be seen in \autoref{fig:architecture_initialization_reward}.
}
\label{table:architecture_initialization_reward}
\end{table*}

\begin{table*}[htb]
\begin{center}
\begin{scriptsize}
\begin{tabular}{c|cccccccc}
\toprule
POIC & CartPole & Pendulum & MountainCar & MountainCarContinuous & Acrobot & HalfCheetah & Hopper & Walker2d \\
\midrule
\begin{tabular}{c} Normal \\ (p-Value) \end{tabular} & \begin{tabular}{c} 0.392 \\ (0.385) \end{tabular} &  \begin{tabular}{c} \textbf{0.769} \\ (0.043) \end{tabular} & \begin{tabular}{c} -0.229 \\ (0.621) \end{tabular} & \begin{tabular}{c} 0.004 \\ (0.994) \end{tabular} & \begin{tabular}{c} 0.177 \\ (0.704) \end{tabular} & \begin{tabular}{c} \textbf{0.855} \\ (0.014) \end{tabular} & \begin{tabular}{c} 0.476 \\ (0.280) \end{tabular} & \begin{tabular}{c} 0.624 \\ (0.134) \end{tabular} \\
\midrule
\begin{tabular}{c} Uniform \\ (p-Value) \end{tabular} & \begin{tabular}{c} -0.087 \\ (0.852) \end{tabular} &  \begin{tabular}{c} 0.451 \\ (0.309)) \end{tabular} & \begin{tabular}{c} -0.842 \\ (0.018) \end{tabular} & \begin{tabular}{c} -0.059 \\ (0.900) \end{tabular} & \begin{tabular}{c} -0.288 \\ (0.532) \end{tabular} & \begin{tabular}{c} \textbf{0.864} \\ (0.012) \end{tabular} & \begin{tabular}{c} 0.188 \\ (0.686) \end{tabular} & \begin{tabular}{c} 0.523 \\ (0.229) \end{tabular} \\
\midrule
\begin{tabular}{c} Xavier (N) \\ (p-Value) \end{tabular} & \begin{tabular}{c} 0.127 \\ (0.787) \end{tabular} &  \begin{tabular}{c} 0.579 \\ (0.173) \end{tabular} & \begin{tabular}{c} -0.503 \\ (0.250) \end{tabular} & \begin{tabular}{c} 0.373 \\ (0.409) \end{tabular} & \begin{tabular}{c} -0.720 \\ (0.068) \end{tabular} & \begin{tabular}{c} 0.344 \\ (0.450) \end{tabular} & \begin{tabular}{c} \textbf{0.957} \\ ($>$ 0.001) \end{tabular} & \begin{tabular}{c} -0.946 \\ (0.001) \end{tabular} \\
\midrule
\begin{tabular}{c} Xavier (U) \\ (p-Value) \end{tabular} & \begin{tabular}{c} \textbf{0.930} \\ (0.002) \end{tabular} &  \begin{tabular}{c} 0.495 \\ (0.258) \end{tabular} & \begin{tabular}{c} -0.406 \\ (0.367) \end{tabular} & \begin{tabular}{c} -0.649 \\ (0.114) \end{tabular} & \begin{tabular}{c} -0.381 \\ (0.399) \end{tabular} & \begin{tabular}{c} 0.080 \\ (0.865) \end{tabular} & \begin{tabular}{c} 0.006 \\ (0.990) \end{tabular} & \begin{tabular}{c} -0.726 \\ (0.065) \end{tabular} \\
\bottomrule
\end{tabular}
\end{scriptsize}
\end{center}
\vskip -0.15in
\caption{Pearson correlation coefficient between the normalized score of different policies (network architecture and initialization) and POIC. 
We can see valid positive correlations in CartPole, Pendulum, HalfCheetah and Hopper with specific initialization, which are statistically significant with $p<0.05$. The 2D-plots can be seen in \autoref{fig:architecture_initialization_optimality}.
}
\label{table:architecture_initialization_optimality}
\end{table*}

\begin{figure*}[ht]
\centering
\includegraphics[width=\linewidth]{}  
\vskip -0.05in
\caption{2D-Scatter plots between PIC~(x-axis) and the normalized score~(y-axis), trained during 500k steps and averaged over 5 seeds. We test combinations of 7 network architectures (0 layers, 1 layer 4 units, 2 layers 4units) and 4 kinds of network initialization (normal, uniform, Xavier normal, Xavier uniform).}
\label{fig:architecture_initialization_reward}
\end{figure*}

\begin{figure*}[ht]
\centering
\includegraphics[width=\linewidth]{}  
\vskip -0.05in
\caption{2D-Scatter plots between POIC~(x-axis) and the normalized score~(y-axis), trained during 500k steps and averaged over 5 seeds. We test combinations of 7 network architectures (0 layers, 1 layer 4 units, 2 layers 4units) and 4 kinds of network initialization (normal, uniform, Xavier normal, Xavier uniform).}
\label{fig:architecture_initialization_optimality}
\end{figure*}

\clearpage

\section{Details of Architecture and Initialization Experiments}
\label{sec:mi_with_different_archi_details}
In this section, we presents the raw scores observed through the experiments in~\autoref{sec:mi_with_different_archi}.
We train PPO with (64, 64) neural networks and 3 different discount factor $\gamma \in \{0.95, 0.99, 0.995\}$. For normalized score, we refer the algorithm-max scores in \autoref{sec:appendix_normalized_score}.
All results are summarized in \autoref{table:raw_reward_arch_init} (average cumulative rewards), \autoref{table:ns_arch_init} (normalized score), \autoref{table:re_arch_init} (PIC), and \autoref{table:oe_arch_init} (POIC).

\begin{table}[htb]
\begin{center}
\begin{small}
\scalebox{0.70}{
\begin{tabular}{l|cccccccc}
\toprule
 & CartPole & Pendulum & MountainCar & MountainCarContinuous & Acrobot & HalfCheetah & Hopper & Walker2d \\
\midrule

0L Normal & 167.38 & -1268.00 & -189.97 & 37.57& -120.18 & 154.04 & 488.97 & 705.56 \\
1L 4U Normal & 178.85 & -1134.51 & -173.60 & 56.37 & -125.43 & 1221.59 & 427.17 & 596.35 \\
1L 32U Normal & 191.51 & -962.21 & -179.17 & 13.59 & -125.97 & -221.18 & 367.72 & 743.74 \\
1L 64U Normal & 188.20 & -926.62 & -165.81 & 59.85 & -133.48 & -344.62 & 303.13 & 570.78 \\
2L 4U Normal & 185.68 & -931.49 & -168.00 & 43.72 & -133.74 & 618.83 & 435.76 & 864.46 \\
2L 32U Normal & 179.36 & -883.60 & -168.76 & 69.01 & -119.79 & -369.87 & 364.67 & 531.89 \\
2L 64U Normal & 182.30 & -852.51 & -173.71 & 19.79 & -148.21 & -404.71 & 173.00 & 294.66 \\
\midrule
0L Uniform & 187.37 & -1243.73	& -188.33 & 31.21 & -127.21	& 780.81 & 600.25 & 1051.04 \\
1L 4U Uniform & 187.19 & -1017.40 & -175.79 & 18.77 & -124.63 & 1057.68 & 591.57 & 1289.52 \\
1L 32U Uniform & 192.31 & -857.24 & -167.46 & 47.44 & -120.57 & 652.51 & 632.90 & 946.86 \\
1L 64U Uniform & 192.84 & -908.35 & -169.29 & 31.27 & -116.97 & 316.07 & 548.21 & 792.32 \\
2L 4U Uniform & 190.61 & -829.38 & -173.24 & 24.82 & -120.39 & 1361.30 & 637.16 & 810.50 \\
2L 32U Uniform & 197.04 & -737.45 & -164.19 & 54.40 & -120.86 & 364.60 & 598.17 & 922.05 \\
2L 64U Uniform & 195.37 & -629.24 & -165.67 & -8.47 & -116.90& -186.69 & 456.83 & 540.52 \\
\midrule
0L Xavier (N) & 182.75 & -1249.18 & -189.02 & 37.42 & -124.27 & 1111.14 & 1096.75 & 1170.13 \\
1L 4U Xavier (N) & 193.55 & -1132.60 & -172.06 & 43.82 & -120.29 & 1219.57 & 599.91 & 1389.66 \\
1L 32U Xavier (N) & 193.81 & -1022.30 & -166.99 & 31.09 & -126.25 & 1644.45 & 1292.61 & 1448.79 \\
1L 64U Xavier (N) & 196.11 & -1069.38 & -166.55 & 43.21 & -122.63 & 1593.45 & 1410.29 & 1403.11 \\
2L 4U Xavier (N) & 189.53 & -937.73 & -171.75 & 43.82 & -122.36 & 1048.78 & 634.12 & 1499.48 \\
2L 32U Xavier (N) & 191.96 & -893.65 & -170.69 & 36.78 & -117.69 & 1459.96 & 1297.16 & 1645.11 \\
2L 64U Xavier (N) & 197.45 & -764.92 & -172.21 & 49.70 & -118.43 & 1507.47 & 1275.05 & 1588.82 \\
\midrule
0L Xavier (U) & 187.37 & -1223.45 & -190.93 & 37.53 & -122.97 & 1117.23 & 663.19 & 1101.14 \\
1L 4U Xavier (U) & 192.31 & -1005.49 & -177.17 & 37.54 & -116.74 & 1218.68 & 611.44 & 1376.33 \\
1L 32U Xavier (U) & 188.04 & -945.72 & -172.31 & 49.55 & -118.81 & 1467.81 & 1202.81 & 1339.44 \\
1L 64U Xavier (U) & 197.10 & -1055.70 & -172.53 & 42.33 & -116.03 & 1977.95 & 1268.04 & 1297.33 \\
2L 4U Xavier (U) & 195.00 & -806.39 & -171.13 & 25.00 & -120.53 & 1370.19 & 804.70 & 1081.45 \\
2L 32U Xavier (U) & 193.43 & -670.05 & -173.58 & 49.16 & -121.57 & 1758.48 & 1168.65 & 1462.36 \\
2L 64U Xavier (U) & 198.21 & -640.18	& -168.91 & 42.80 & -115.60 & 1677.97 & 1080.10	& 1410.35 \\
\bottomrule
\end{tabular}
}
\end{small}
\end{center}
\vskip -0.15in
\caption{Cumulative rewards averaged over 3 different discount factor $\gamma \in \{0.95, 0.99, 0.995\}$ and 5 random seeds.}
\label{table:raw_reward_arch_init}
\end{table}

\begin{table}[htb]
\begin{center}
\begin{small}
\scalebox{0.70}{
\begin{tabular}{l|cccccccc}
\toprule
 & CartPole & Pendulum & MountainCar & MountainCarContinuous & Acrobot & HalfCheetah & Hopper & Walker2d \\
\midrule
0L Normal & 0.829 & 0.345 & 0.098 & 0.695 & 0.873 & 0.161 & 0.087 & 0.198 \\
1L 4U Normal & 0.889 & 0.422 & 0.257 & 0.798 & 0.861 & 0.223 & 0.077 & 0.168 \\
1L 32U Normal & 0.956 & 0.522 & 0.203 & 0.580 & 0.859 & 0.137 & 0.067 & 0.209 \\
1L 64U Normal & 0.938 & 0.544 & 0.333 & 0.816 & 0.842 & 0.125 & 0.056 & 0.161 \\
2L 4U Normal & 0.925 & 0.536 & 0.311 & 0.734 & 0.842 & 0.190 & 0.079 & 0.243 \\
2L 32U Normal & 0.892 & 0.563 & 0.304 & 0.863 & 0.874 & 0.117 & 0.066 & 0.150 \\
2L 64U Normal & 0.907 & 0.582 & 0.256 & 0.611 & 0.808 & 0.048 & 0.033 & 0.083 \\
\midrule
0L Uniform & 0.934 & 0.361 & 0.114 & 0.497 & 0.857 & 0.200 & 0.107 & 0.295 \\
1L 4U Uniform & 0.933 & 0.485 & 0.236 & 0.566 & 0.863 & 0.220 & 0.108 & 0.362 \\
1L 32U Uniform & 0.960 & 0.581 & 0.317 & 0.752 & 0.872 & 0.170 & 0.112 & 0.266 \\
1L 64U Uniform & 0.963 & 0.549 & 0.299 & 0.670 & 0.880 & 0.152 & 0.098 & 0.223 \\
2L 4U Uniform & 0.951 & 0.594 & 0.260 & 0.600 & 0.872 & 0.237 & 0.115 & 0.228 \\
2L 32U Uniform & 0.985 & 0.649 & 0.348 & 0.788 & 0.871 & 0.140 & 0.106 & 0.259 \\
2L 64U Uniform & 0.976 & 0.711 & 0.334 & 0.467 & 0.880 & 0.074 & 0.082 & 0.152 \\
\midrule
0L Xavier (N) & 0.910 & 0.359 & 0.107 & 0.701 & 0.863 & 0.217 & 0.192 & 0.329 \\
1L 4U Xavier (N) & 0.966 & 0.423 & 0.272 & 0.733 & 0.873 & 0.217 & 0.107 & 0.390 \\
1L 32U Xavier (N) & 0.968 & 0.485 & 0.321 & 0.654 & 0.859 & 0.219 & 0.225 & 0.407 \\
1L 64U Xavier (N) & 0.980 & 0.456 & 0.325 & 0.705 & 0.867 & 0.215 & 0.246 & 0.394 \\
2L 4U Xavier (N) & 0.945 & 0.532 & 0.275 & 0.734 & 0.868 & 0.208 & 0.112 & 0.421 \\
2L 32U Xavier (N) & 0.958 & 0.557 & 0.285 & 0.697 & 0.879 & 0.203 & 0.226 & 0.462 \\
2L 64U Xavier (N) & 0.987 & 0.633 & 0.270 & 0.759 & 0.877 & 0.192 & 0.222 & 0.446 \\
\midrule
0L Xavier (U) & 0.934 & 0.371 & 0.088 & 0.679 & 0.866 & 0.218 & 0.118 & 0.309 \\
1L 4U Xavier (U) & 0.960 & 0.495 & 0.222 & 0.701 & 0.881 & 0.211 & 0.109 & 0.386 \\
1L 32U Xavier (U) & 0.937 & 0.531 & 0.269 & 0.757 & 0.876 & 0.223 & 0.210 & 0.376 \\
1L 64U Xavier (U) & 0.985 & 0.464 & 0.267 & 0.700 & 0.882 & 0.252 & 0.221 & 0.364 \\
2L 4U Xavier (U) & 0.974 & 0.608 & 0.281 & 0.638 & 0.872 & 0.224 & 0.142 & 0.304 \\
2L 32U Xavier (U) & 0.966 & 0.688 & 0.257 & 0.759 & 0.870 & 0.199 & 0.204 & 0.411 \\
2L 64U Xavier (U) & 0.991 & 0.704 & 0.302 & 0.716 & 0.883 & 0.195 & 0.189 & 0.396 \\
\bottomrule
\end{tabular}
}
\end{small}
\end{center}
\vskip -0.15in
\caption{Algorithm-based normalized scores. We use the minimum values during random-sampling and the maximum values reported in \autoref{table:algorithm_results_classic_control} and \autoref{table:appendix_mujoco_algo_ave_max}.}
\label{table:ns_arch_init}
\end{table}

\begin{table}[htb]
\begin{center}
\begin{small}
\scalebox{0.70}{
\begin{tabular}{l|cccccccc}
\toprule
 & CartPole & Pendulum & MountainCar & MountainCarContinuous & Acrobot & HalfCheetah & Hopper & Walker2d \\
\midrule
0L Normal & 1.269 & 3.402 & 0.016 & 5.707 & 0.599 & 3.486 & 2.732 & 3.685 \\
1L 4U Normal & 1.321 & 3.681 & 0.052 & 5.588 & 0.583 & 3.789 & 2.698 & 3.608 \\
1L 32U Normal & 1.359 & 3.403 & 0.022 & 4.774 & 0.606 & 3.105 & 2.283 & 3.412 \\
1L 64U Normal & 1.326 & 3.348 & 0.041 & 4.433 & 0.579 & 3.065 & 2.267 & 3.330 \\
2L 4U Normal & 1.335 & 3.716 & 0.033 & 5.421 & 0.608 & 3.847 & 2.731 & 3.655 \\
2L 32U Normal & 1.313 & 3.570 & 0.055 & 4.086 & 0.641 & 2.771 & 2.300 & 3.052 \\
2L 64U Normal & 1.334 & 3.559 & 0.023 & 3.395 & 0.667 & 2.822 & 2.274 & 2.886 \\
\midrule
0L Uniform & 1.290 & 3.586 & 0.067 & 5.612 & 0.569 & 3.432 & 2.810 & 3.612 \\
1L 4U Uniform & 1.309 & 4.112 & 0.027 & 5.907 & 0.556 & 3.838 & 2.599 & 3.722 \\
1L 32U Uniform & 1.280 & 3.503 & 0.044 & 5.568 & 0.540 & 3.251 & 2.587 & 3.562 \\
1L 64U Uniform & 1.349 & 3.466 & 0.040 & 5.327 & 0.583 & 3.119 & 2.376 & 3.509 \\
2L 4U Uniform & 1.311 & 4.187 & 0.028 & 6.057 & 0.582 & 3.976 & 2.804 & 3.822 \\
2L 32U Uniform & 1.339 & 3.552 & 0.013 & 4.968 & 0.593 & 3.172 & 2.134 & 3.443 \\
2L 64U Uniform & 1.352 & 3.500 & 0.021 & 4.661 & 0.606 & 3.052 & 2.248 & 3.174 \\
\midrule
0L Xavier (N) & 1.275 & 3.326 & 0.040 & 5.516 & 0.670 & 3.592 & 2.705 & 3.699 \\
1L 4U Xavier (N) & 1.291 & 3.652 & 0.029 & 5.569 & 0.573 & 3.840 & 2.867 & 3.743 \\
1L 32U Xavier (N) & 1.329 & 3.772 & 0.025 & 5.759 & 0.630 & 3.502 & 2.660 & 3.598 \\
1L 64U Xavier (N) & 1.300 & 3.990 & 0.025 & 5.728 & 0.559 & 3.418 & 2.511 & 3.695 \\
2L 4U Xavier (N) & 1.327 & 3.716 & 0.022 & 5.432 & 0.575 & 3.934 & 2.984 & 3.721 \\
2L 32U Xavier (N) & 1.324 & 3.667 & 0.034 & 5.604 & 0.590 & 3.462 & 2.139 & 3.555 \\
2L 64U Xavier (N) & 1.319 & 3.746 & 0.047 & 5.838 & 0.600 & 3.483 & 2.443 & 3.565 \\
\midrule
0L Xavier (U) & 1.271 & 3.240 & 0.022 & 5.599 & 0.649 & 3.229 & 2.825 & 3.648 \\
1L 4U Xavier (U) & 1.319 & 3.597 & 0.022 & 5.436 & 0.549 & 3.683 & 2.867 & 3.694 \\
1L 32U Xavier (U) & 1.300 & 3.682 & 0.024 & 5.749 & 0.609 & 3.379 & 2.568 & 3.621 \\
1L 64U Xavier (U) & 1.274 & 3.967 & 0.027 & 5.687 & 0.609 & 3.379 & 2.568 & 3.621 \\
2L 4U Xavier (U) & 1.313 & 3.673 & 0.010 & 5.285 & 0.533 & 3.970 & 2.736 & 3.728 \\
2L 32U Xavier (U) & 1.331 & 3.665 & 0.001 & 5.575 & 0.596 & 3.425 & 2.547 & 3.545 \\
2L 64U Xavier (U) & 1.303 & 3.730 & 0.017 & 5.773 & 0.596 & 3.425 & 2.547 & 3.545 \\
\bottomrule
\end{tabular}
}
\end{small}
\end{center}
\vskip -0.15in
\caption{Estimated Policy Information Capacity.}
\label{table:re_arch_init}
\end{table}

\begin{table}[htb]
\begin{center}
\begin{small}
\scalebox{0.70}{
\begin{tabular}{l|cccccccc}
\toprule
 & CartPole & Pendulum & MountainCar & MountainCarContinuous & Acrobot & HalfCheetah & Hopper & Walker2d \\
\midrule
0L Normal & 0.186 & 0.003516 & 0.001586 & 0.008363 & 0.129 & 0.000150 & 0.000000551 & 0.000198 \\
1L 4U Normal & 0.235 & 0.004850 & 0.001641 & 0.017159 & 0.131 & 0.000300 & 0.000000935 & 0.000116 \\
1L 32U Normal & 0.222 & 0.006613 & 0.000573 & 0.076397 & 0.124 & 0.000049 & 0.000000312 & 0.000061 \\
1L 64U Normal & 0.200 & 0.010994 & 0.000502 & 0.101649 & 0.111 & 0.000037 & 0.000000339 & 0.000031 \\
2L 4U Normal & 0.223 & 0.005492 & 0.000699 & 0.044784 & 0.109 & 0.000351 & 0.000001318 & 0.000075 \\
2L 32U Normal & 0.165 & 0.014715 & 0.002074 & 0.119891 & 0.106 & 0.000014 & 0.000000280 & 0.000014 \\
2L 64U Normal & 0.182 & 0.017335 & 0.000939 & 0.126900 & 0.119 & 0.000008 & 0.000000182 & 0.000016 \\
\midrule
0L Uniform & 0.202 & 0.004520 & 0.006387 & 0.001799 & 0.125 & 0.000157 & 0.000000507 & 0.000084 \\
1L 4U Uniform & 0.204 & 0.007866 & 0.000603 & 0.003397 & 0.117 & 0.000323 & 0.000001682 & 0.000112 \\
1L 32U Uniform & 0.210 & 0.003882 & 0.000628 & 0.018515 & 0.113 & 0.000059 & 0.000000427 & 0.000120 \\
1L 64U Uniform & 0.203 & 0.004031 & 0.002376 & 0.045212 & 0.120 & 0.000033 & 0.000000341 & 0.000047 \\
2L 4U Uniform & 0.184 & 0.008297 & 0.000383 & 0.003576 & 0.123 & 0.000343 & 0.000004932 & 0.000180 \\
2L 32U Uniform & 0.182 & 0.004696 & 0.000007 & 0.070520 & 0.122 & 0.000034 & 0.000000319 & 0.000048 \\
2L 64U Uniform & 0.220 & 0.010994 & 0.000516 & 0.117416 & 0.119 & 0.000012 & 0.000000266 & 0.000067 \\
\midrule
0L Xavier (N) & 0.212 & 0.003354 & 0.002235 & 0.013028 & 0.141 & 0.000192 & 0.000000957 & 0.000075 \\
1L4U Xavier (N) & 0.198 & 0.004885 & 0.001138 & 0.017027 & 0.123 & 0.000353 & 0.000002099 & 0.000153 \\
1L32U Xavier (N) & 0.226 & 0.005830 & 0.000191 & 0.007107 & 0.136 & 0.000104 & 0.000000659 & 0.000156 \\
1L 64U Xavier (N) & 0.210 & 0.006233 & 0.001928 & 0.003256 & 0.115 & 0.000088 & 0.000000927 & 0.000157 \\
2L 4U Xavier (N) & 0.215 & 0.005060 & 0.001453 & 0.032625 & 0.118 & 0.000342 & 0.000002443 & 0.000186 \\
2L 32U Xavier (N) & 0.210 & 0.004961 & 0.001432 & 0.011564 & 0.117 & 0.000084 & 0.000000642 & 0.000193 \\
2L 64U Xavier (N) & 0.221 & 0.005896 & 0.002337 & 0.008424 & 0.118 & 0.000067 & 0.000000702 & 0.000190 \\
\midrule
0L Xavier (U) & 0.195 & 0.002517 & 0.000868 & 0.011988 & 0.145 & 0.000169 & 0.000000728 & 0.000167 \\
1L 4U Xavier (U) & 0.202 & 0.004492 & 0.000988 & 0.019024 & 0.120 & 0.000306 & 0.000002099 & 0.000226 \\
1L 32U Xavier (U) & 0.198 & 0.005724 & 0.000623 & 0.007098 & 0.128 & 0.000088 & 0.000000685 & 0.000174 \\
1L 64U Xavier (U) & 0.216 & 0.005925 & 0.000983 & 0.004143 & 0.128 & 0.000088 & 0.000000652 & 0.000174 \\
2L 4U Xavier (U) & 0.221 & 0.004763 & 0.000774 & 0.028628 & 0.113 & 0.000402 & 0.000001439 & 0.000072 \\
2L 32U Xavier (U) & 0.212 & 0.004936 & 0.000009 & 0.012626 & 0.127 & 0.000076 & 0.000000551 & 0.000078 \\
2L 64U Xavier (U) & 0.224 & 0.005346 & 0.000092 & 0.008398 & 0.127 & 0.000076 & 0.000000589 & 0.000078 \\
\bottomrule
\end{tabular}
}
\end{small}
\end{center}
\vskip -0.15in
\caption{Estimated Policy-Optimal Information Capacity.}
\label{table:oe_arch_init}
\end{table}

\clearpage

\section{Evaluating the Dynamics and Initialization Noises}
\label{sec:appendix_dynamics_initialization}
In this section, we evaluate one of our information capacity metrics' properties: can we use PIC and POIC for tuning the noise levels in MDPs that might help to learn? To answer this question, we design the experiments to observe the correlations between each metric (POIC, optimality marginal entropy, optimality conditional entropy, PIC, reward marginal entropy, reward conditional entropy, and variance) and algorithm-based normalized scores.

For the experiments, we prepare 12 cartpole environments with 3 initialization and 4 dynamics noises. In these environments, we initialize 4-dimensional states with uniform distribution $\textit{Unif}(-u_{\text{init}}, u_{\text{init}})$, where $u_{\text{init}}$ is a tunable parameter $u_{\text{init}} \in \{0.05, 0.1. 0.15\}$, and add 1-dimensional transition noise to angular velocity with uniform distribution $\textit{Unif}(-u_{\text{dyn}}, u_{\text{dyn}})$, where $u_{\text{dyn}}$ is also a tunable parameter $u_{\text{dyn}} \in \{0.0, 0.03, 0.05, 0.1\}$.
As in Section \ref{sec:mi_on_benchmark}, we prepare the bag-of-architectures to measure PIC and POIC: ([0] layers $+$ [1, 2] layers $\times$ [4, 32, 64] hidden units) $\times$ [Gaussian prior $\mathcal{N}(0, I)$, Uniform prior $\textit{Unif}(-1,1)$, Xavier Normal, Xavier Uniform] $\times$ [w/ bias, w/o bias], and bag-of-algorithms (PPO, ES, DQN with different hyper-parameters) to compute the algorithm-based normalized scores (see \autoref{table:algorithm_results_noise} for each raw score).

The results are shown in \autoref{table:task_arch_init}. The POIC seems the best metric that positively correlates to the algorithm-based normalized scores ($R=0.860$; statistically significant with $p<0.001$), but PIC doesn't show such a trend. This suggests that POIC might be used for tuning initialization and dynamics noises that help to learn.

\begin{table}[htb]
\begin{center}
\begin{small}
\scalebox{0.90}{
\begin{tabular}{l|cc|ccccccc|c}
\toprule
Environment & Initial Noise & Dynamics Noise & $\hat{\mathcal{I}}(\mathcal{O};\Theta)$ & $\hat{\mathcal{H}}(\mathcal{O})$ & $\hat{\mathcal{H}}(\mathcal{O}|\Theta)$ & $\hat{\mathcal{I}}(R;\Theta)$ & $\hat{\mathcal{H}}(R)$ & $\hat{\mathcal{H}}(R|\Theta)$ & Variance & Score(A)\\
\midrule
\multirow{12}{*}{CartPole} & 0.05 & 0.0 & 0.207 & 0.292 & 0.085 & 1.315 & 3.632 & 2.317 & 17.586 & 0.886 \\
 & 0.05 & 0.03 & 0.172 & 0.252 & 0.080 & 1.268 & 3.696 & 2.428 & 15.540 & 0.848 \\
 & 0.05 & 0.05 & 0.151 & 0.222 & 0.071 & 1.233 & 3.723 & 2.490 & 14.024 & 0.856 \\
 & 0.05 & 0.1 & 0.111 & 0.172 & 0.061 & 1.151 & 3.746 & 2.595 & 11.302 & 0.827 \\
 & 0.1 & 0.0 & 0.140 & 0.200 & 0.060 & 1.110 & 3.823 & 2.713 & 12.102 & 0.849 \\
 & 0.1 & 0.03 & 0.130 & 0.185 & 0.055 & 1.095 & 3.843 & 2.748 & 11.430 & 0.849 \\
 & 0.1 & 0.05 & 0.120 & 0.172 & 0.052 & 1.076 & 3.853 & 2.776 & 10.826 & 0.847 \\
 & 0.1 & 0.1 & 0.093 & 0.143 & 0.050 & 1.019 & 3.866 & 2.847 & 9.343 & 0.820 \\
 & 0.15 & 0.0 & 0.110 & 0.160 & 0.050 & 0.922 & 3.886 & 2.964 & 9.710 & 0.850 \\
 & 0.15 & 0.03 & 0.105 & 0.150 & 0.046 & 0.913 & 3.895 & 2.981 & 9.326 & 0.848 \\
 & 0.15 & 0.05 & 0.098 & 0.142 & 0.044 & 0.901 & 3.899 & 2.999 & 8.951 & 0.828 \\
 & 0.15 & 0.1 & 0.077 & 0.120 & 0.043 & 0.859 & 3.902 & 3.042 & 7.891 & 0.824 \\
\midrule
\multicolumn{3}{l|}{Correlation Coefficient: Score(A)} & 0.860 & 0.824 & 0.698 & 0.613 & -0.625 & -0.625 & 0.783 & -- \\
\multicolumn{3}{l|}{p-Value: Score(A)} & $>$0.001 & $>$0.001 & 0.012 & 0.034 & 0.030 & 0.030 & 0.003 & -- \\
\bottomrule
\end{tabular}
}
\end{small}
\end{center}
\vskip -0.15in
\caption{PIC and POIC under noisy initialization and dynamics.}
\label{table:task_arch_init}
\end{table}

\begin{landscape}
\begin{table}[p]
\setlength{\tabcolsep}{3pt}
\begin{center}
\begin{small}
\scalebox{0.95}{
\begin{tabular}{l|c|cccccccccccc}
\toprule
\multicolumn{2}{l|}{} & \multicolumn{12}{c}{(Initialization Noise, Dynamics Noise)} \\
\midrule
Algorithm & Hyper-Parameters & (0.05, 0.0) & (0.05, 0.03) & (0.05, 0.05) & (0.05, 0.1) & (0.1, 0.0) & (0.1, 0.03) & (0.1, 0.05) & (0.1, 0.1) & (0.15, 0.0) & (0.15, 0.03) & (0.15, 0.05) & (0.15, 0.1) \\
\midrule
PPO & (64, 64), $\gamma=0.995$ & 200.0 & 200.0 & 200.0 & 200.0 & 200.0 & 200.0 & 200.0 & 200.0 & 199.6 & 200.0 & 197.8 & 199.4 \\
PPO & (64, 64), $\gamma=0.99$ & 200.0 & 199.7 & 200.0 & 200.0 & 200.0 & 199.2 & 196.5 & 200.0 & 200.0 & 200.0 & 196.2 & 197.2 \\
PPO & (64, 64), $\gamma=0.95$ & 194.0 & 200.0 & 197.2 & 192.5 & 193.7 & 196.9 & 182.4 & 193.2 & 189.8 & 186.5 & 200.0 & 193.5 \\
PPO & (128, 64), $\gamma=0.995$ & 200.0 & 200.0 & 200.0 & 197.2 & 200.0 & 200.0 & 200.0 & 200.0 & 199.6 & 199.9 & 200.0 & 199.9 \\
PPO & (128, 64), $\gamma=0.99$ & 200.0 & 200.0 & 200.0 & 200.0 & 200.0 & 200.0 & 193.1 & 198.5 & 200.0 & 196.8 & 198.3 & 193.8 \\
PPO & (128, 64), $\gamma=0.95$ & 198.1 & 196.1 & 199.3 & 192.2 & 196.1 & 191.0 & 195.3 & 196.1 & 197.4 & 182.5 & 189.3 & 184.9 \\
PPO & (128, 128), $\gamma=0.995$ & 200.0 & 199.6 & 200.0 & 200.0 & 197.8 & 200.0 & 200.0 & 199.7 & 191.1 & 196.5 & 200.0 & 195.9 \\
PPO & (128, 128), $\gamma=0.99$ & 200.0 & 200.0 & 200.0 & 200.0 & 200.0 & 200.0 & 197.8 & 200.0 & 198.6 & 200.0 & 199.3 & 199.4 \\
PPO & (128, 128), $\gamma=0.95$ & 197.1 & 195.2 & 196.4 & 197.5 & 192.9 & 193.4 & 196.5 & 191.7 & 195.6 & 193.4 & 191.2 & 185.8 \\
\midrule
ES & (16, 16), $\sigma=0.1$ & 188.1 & 165.5 & 166.5 & 147.1 & 185.3 & 196.3 & 179.1 & 130.2 & 170.3 & 167.2 & 155.1 & 144.6 \\
ES & (16, 16), $\sigma=0.1$, rand & 197.1 & 161.7 & 175.1 & 175.9 & 186.4 & 170.7 & 185.8 & 140.9 & 167.7 & 173.0 & 159.8 & 167.3 \\
ES & (64, 64), $\sigma=0.1$ & 168.6 & 159.6 & 187.2 & 170.0 & 160.9 & 150.2 & 171.9 & 115.7 & 173.3 & 150.0 & 170.4 & 141.5 \\
ES & (64, 64), $\sigma=0.1$, rand & 168.0 & 177.0 & 170.8 & 112.7 & 161.4 & 174.3 & 169.4 & 131.1 & 173.4 & 166.6 & 138.9 & 128.2 \\
\midrule
DQN & (100, 100), $\gamma=0.95$ & 133.2 & 127.1 & 131.4 & 131.5 & 123.4 & 102.5 & 87.3 & 128.8 & 116.0 & 134.8 & 130.7 & 117.8 \\
DQN & (100, 100), $\gamma=0.99$ & 185.2 & 169.6 & 114.7 & 145.0 & 163.1 & 174.3 & 177.2 & 193.1 & 169.0 & 157.8 & 163.6 & 171.4 \\
DQN & (200, 200), $\gamma=0.95$ & 132.2 & 97.2 & 73.9 & 118.6 & 96.2 & 135.6 & 113.0 & 128.8 & 106.0 & 107.3 & 117.0 & 122.6 \\
DQN & (200, 200), $\gamma=0.99$ & 195.5 & 196.0 & 179.7 & 194.9 & 195.9 & 149.4 & 172.2 & 182.4 & 184.8 & 183.2 & 167.3 & 189.4 \\
DQN & (50, 50), $\gamma=0.95$ & 161.3 & 123.5 & 166.4 & 91.5 & 98.6 & 101.3 & 156.2 & 113.3 & 132.4 & 127.5 & 121.0 & 90.2 \\
DQN & (50, 50), $\gamma=0.99$ & 176.3 & 153.7 & 169.4 & 172.8 & 170.0 & 179.8 & 163.6 & 137.0 & 168.8 & 155.2 & 150.8 & 167.9 \\
DQN & (50, 50, 50), $\gamma=0.95$ & 87.7 & 102.7 & 103.0 & 101.6 & 84.4 & 114.5 & 64.8 & 134.2 & 108.1 & 119.4 & 93.1 & 111.0 \\
DQN & (50, 50, 50), $\gamma=0.99$ & 159.8 & 164.1 & 192.6 & 164.7 & 185.6 & 163.3 & 180.2 & 160.5 & 153.9 & 188.6 & 165.2 & 185.3 \\
\midrule
$r_{\text{ave}}^{\text{algo}}$ & -- & 178.2 & 170.9 & 172.6 & 166.9 & 171.0 & 171.1 & 170.6 & 165.5 & 171.2 & 170.8 & 166.9 & 166.0 \\
\midrule
$r_{\max}^{\text{algo}}$ & -- & 200.0 & 200.0 & 200.0 & 200.0 & 200.0 & 200.0 & 200.0 & 200.0 & 200.0 & 200.0 & 200.0 & 199.9 \\
\midrule
$r_{\min}^{\text{rand}}$ & -- & 8.742 & 8.752 & 8.759 & 8.789 & 8.137 & 8.139 & 8.161 & 8.184 & 7.479 & 7.482 & 7.482 & 7.476 \\
\bottomrule
\end{tabular}
}
\end{small}
\end{center}
\vskip -0.15in
\caption{Performance of a variety of algorithms in CartPole with initialization and dynamics noises. The results are averaged over 5 random seeds. We change 2 hyper-parameters, architecture of neural networks and discount factor $\gamma$.}
\label{table:algorithm_results_noise}
\end{table}
\end{landscape}

\clearpage

\section{Our Information Capacity Metrics based on Channel Capacity}
In previous works~\citep{Klyubin2005empower,tishby2011information}, \textit{empowerment} is originally defined with channel capacity between future state and n-step action sequence, instead of mutual information.

Following these definition, we extensively evaluate the variants of our PIC and POIC, based on \textit{channel capacity}:
\begin{equation}
\begin{split}
&\text{Policy Information Capacity: }~\underset{p(\theta)}{\max}~\mathcal{I}(R; \Theta), \\
&\text{Policy-Optimal Information Capacity: }~\underset{p(\theta)}{\max}~\mathcal{I}(\mathcal{O}; \Theta).
\end{split}
\label{eq:channel_capacity}
\end{equation}

The experimental settings are similar to Section~\ref{sec:mi_on_benchmark}; we investigate the correlation between channel-capacity-based information capacity metrics, and algorithm- or random-sampling-based normalized score.
To take the maximum over parameter distribution in~\autoref{eq:channel_capacity}, we prepare a ``bag-of-policy-architectures'' in practice: ([0] layers $+$ [1, 2] layers $\times$ [4, 32, 64] hidden units) $\times$ [Gaussian prior $\mathcal{N}(0, I)$, Uniform prior $\textit{Unif}(-1,1)$, Xavier Normal, Xavier Uniform] $\times$ [w/ bias, w/o bias], which amounts to 56 different parameter distributions $p(\theta)$.

The results are shown in \autoref{fig:peason_correlation_coefficient_channel_capacity} (also shown in~\autoref{fig:peason_correlation_coefficient_channel_capacity_sub} and \autoref{table:channel_capacity_full}).
POIC seems to positively correlate to the algorithm-based normalized score ($R=0.707$; statistically significant with $p<0.01$), which can be regarded as the more realistic of the two task difficulty scores.
On the other hand, PIC also shows a weak positive correlation with the random-sampling-based normalized score ($R=0.421$).
These results show the similar trends with our definition of PIC and POIC, based on mutual information.

\begin{figure*}[ht]
\centering
\includegraphics[width=\linewidth]{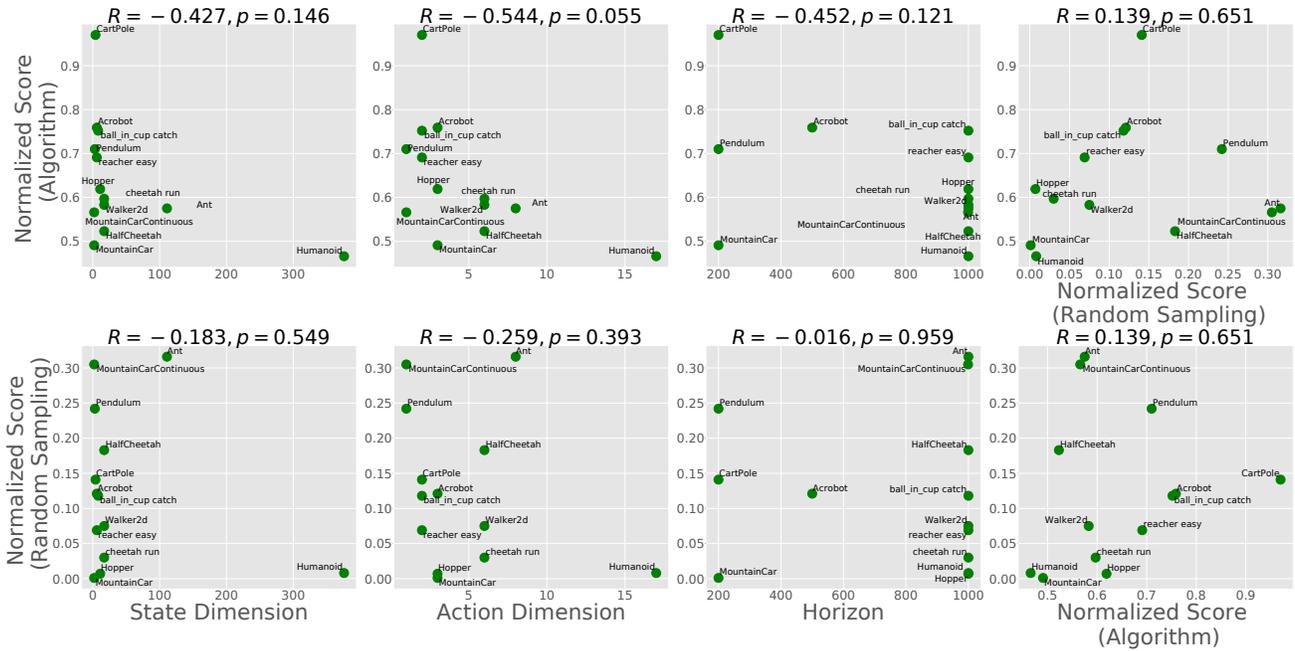}  
\vskip -0.05in
\caption{\textbf{[Channel Capacity version]}: 2D-Scatter plots between each metric (State dimension, action dimension, episode horizon, and normalized scores; x-axis) and each normalized score (algorithm-based (top) and random-sampling-based (bottom); y-axis).}
\label{fig:peason_correlation_coefficient_channel_capacity_sub}
\end{figure*}

\begin{landscape}
\begin{figure*}[p]
\centering
\includegraphics[width=\linewidth]{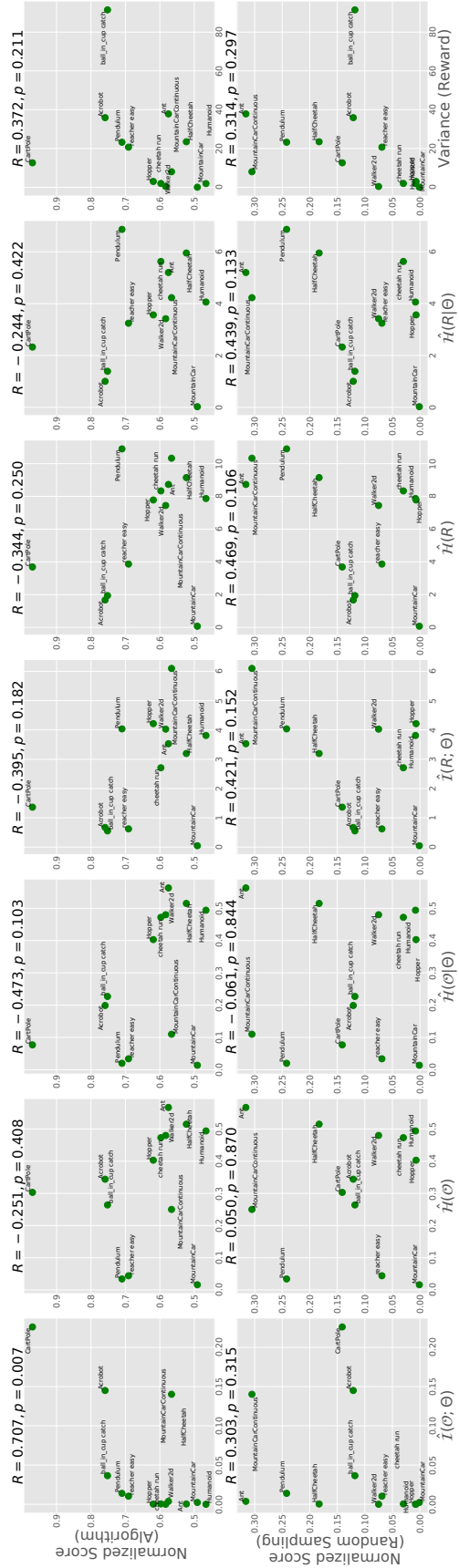}  
\vskip -0.05in
\caption{\textbf{[Channel Capacity version]}: 2D-Scatter plots between each metric (x-axis) and normalized scores (algorithm-based (top) and random-sampling-based (bottom) ; y-axis); see~\autoref{table:channel_capacity_full} for the details. Variance (last column) approximately corresponds to the metric proposed by~\citet{oller2020analyzing}.
POIC positively correlates with algorithm-based normalized score ($R=0.707$; statistically significant with $p<0.01$), the more realistic of the two task difficulty scores.}
\label{fig:peason_correlation_coefficient_channel_capacity}
\end{figure*}
\end{landscape}

\begin{landscape}
\begin{table}[p]
\setlength{\tabcolsep}{3pt}
\begin{center}
\begin{small}
\begin{tabular}{l|c|c|cccccccccc}
\toprule
Environment & Score(A) & Score(R) & $\hat{\mathcal{I}}(\mathcal{O};\Theta)$ & $\hat{\mathcal{H}}(\mathcal{O})$ & $\hat{\mathcal{H}}(\mathcal{O}|\Theta)$ & $\hat{\mathcal{I}}(R;\Theta)$ & $\hat{\mathcal{H}}(R)$ & $\hat{\mathcal{H}}(R|\Theta)$ & Variance & State dim & Action dim & Horizon \\
\midrule
CartPole & 0.970 & 0.141 & 0.225996 & 0.303 & 0.077 & 1.363 & 3.687 & 2.324 & 12.610 & 4 & 2 & 200 \\
Pendulum & 0.710 & 0.242 & 0.013944 & 0.034 & 0.020 & 4.036 & 10.901 & 6.865 & 23.223 & 3 & 1 & 200 \\
MountainCar & 0.491 & 0.001 & 0.002337 & 0.016 & 0.014 & 0.047 & 0.070 & 0.023 & 0.054 & 2 & 3 & 200 \\
MountainCarContinuous & 0.566 & 0.305 & 0.140193 & 0.250 & 0.110 & 6.100 & 10.331 & 4.230 & 8.022 & 2 & 1 & 999 \\
Acrobot & 0.759 & 0.121 & 0.145020 & 0.344 & 0.199 & 0.670 & 1.672 & 1.002 & 35.891 & 6 & 3 & 500 \\
Ant & 0.575 & 0.316 & 0.003742 & 0.567 & 0.563 & 3.529 & 8.730 & 5.201 & 37.830 & 111 & 8 & 1000 \\
HalfCheetah & 0.523 & 0.183 & 0.000402 & 0.515 & 0.515 & 3.195 & 9.148 & 5.953 & 23.468 & 17 & 6 & 1000 \\
Hopper & 0.619 & 0.007 & 0.000226 & 0.403 & 0.403 & 4.214 & 7.780 & 3.565 & 3.006 & 11 & 3 & 1000 \\
Walker2d & 0.583 & 0.075 & 5.1e-05 & 0.480 & 0.480 & 4.027 & 7.445 & 3.418 & 0.403 & 17 & 6 & 1000 \\
Humanoid & 0.466 & 0.008 & 2.9e-05 & 0.494 & 0.494 & 3.810 & 7.870 & 4.061 & 1.917 & 376 & 17 & 1000 \\
cheetah run & 0.597 & 0.030 & 0.000483 & 0.473 & 0.472 & 2.708 & 8.335 & 5.627 & 1.972 & 17 & 6 & 1000 \\
reacher easy & 0.691 & 0.069 & 0.010335 & 0.044 & 0.034 & 0.622 & 3.866 & 3.244 & 20.669 & 6 & 2 & 1000 \\
ball\_in\_cup catch & 0.752 & 0.118 & 0.036467 & 0.264 & 0.227 & 0.553 & 1.942 & 1.389 & 91.454 & 8 & 2 & 1000 \\
\midrule
Correlation Coefficient: Score(A) & -- & 0.139 & 0.707 & -0.251 & -0.473 & -0.395 & -0.344 & -0.244 & 0.372 & -0.427 & -0.544 & -0.452 \\
p-Value: Score(A) & -- & 0.651 & 0.007 & 0.408 & 0.103 & 0.182 & 0.250 & 0.422 & 0.211 & 0.146 & 0.055 & 0.121 \\
Correlation Coefficient: Score(R) & 0.139 & -- & 0.303 & 0.050 & -0.061 & 0.421 & 0.469 & 0.439 & 0.314 & -0.183 & -0.259 & -0.016 \\
p-Value: Score(R) & 0.651 & -- & 0.315 & 0.870 & 0.844 & 0.152 & 0.106 & 0.133 & 0.297 & 0.549 & 0.393 & 0.959 \\
\bottomrule
\end{tabular}
\end{small}
\end{center}
\vskip -0.15in
\caption{\textbf{[Channel Capacity version]}: POIC $\hat{\mathcal{I}}(\mathcal{O};\Theta)$, optimality marginal entropy $\hat{\mathcal{H}}(\mathcal{O})$, optimality conditional entropy $\hat{\mathcal{H}}(\mathcal{O}|\Theta)$, PIC $\hat{\mathcal{I}}(R;\Theta)$, reward marginal entropy $\hat{\mathcal{H}}(R)$, reward conditional entropy $\hat{\mathcal{H}}(R|\Theta)$, normalized variance of return in standard RL benchmark environments, Pearson correlation coefficient to the algorithm-based normalized score~(Score(A)), and the random-sampling-based normalized score~(Score(R)). We prepare the bags of policy architectures, 56 variants in total; ([0] layers $+$ [1, 2] layers $\times$ [4, 32, 64] units) $\times$ [Gaussian prior $\mathcal{N}(0, I)$, Uniform prior $\textit{Unif}(-1,1)$, Xavier Normal, Xavier Uniform] $\times$ [w/ bias, w/o bias]. Since we consider the definition based on channel capacity here, we take the maximum value of each information capacity metrics over these policy architectures. The results suggest that POIC seems to positively correlate with algorithm-based normalized score ($R=0.707$; statistically significant with $p<0.01$) better than all other alternatives including variance of returns~\citep{oller2020analyzing}.}
\label{table:channel_capacity_full}
\end{table}
\end{landscape}

\end{document}